\newcommand{\qed}{\hfill\BlackBox}
\DeclareMathOperator{\sgn}{sgn}
\def \x {\mathbf{x}}
\def \w {\mathbf{w}}
\def \r {\mathbf{r}}
\def \c {\mathbf{c}}
\def \R {\mathbb{R}}
\def \SK {\widehat{K}}
\def \Sh {\widehat{S}}
\def \gh {\widehat{g}}
\def \L  {\mathcal{L}}
\def \wh {\widehat{\w}}
\def \Lh {\widehat{\mathcal{L}}}
\def \ch {\widehat{\c}}
\def \a {\mathbf{a}}
\def \At {\widetilde{A}}
\begin{document}

\title{One-Pass AUC Optimization}

\author{\name Wei Gao \email gaow@lamda.nju.edu.cn\\
\addr National Key Laboratory for Novel Software Technology\\
Nanjing University, Nanjing 210023, China \\
\AND Rong Jin \email rongjin@cse.msu.edu\\
\addr Department of Computer Science and Engineering,\\
Michigan State University, East Lansing, MI, 48824, USA \\
\AND Shenghuo Zhu \email zsh@nec-labs.com\\
\addr NEC Laboratories America, CA, 95014, USA\\
\AND Zhi-Hua Zhou \email zhouzh@lamda.nju.edu.cn\\
\addr National Key Laboratory for Novel Software Technology\\
Nanjing University, Nanjing 210023, China \\
       }

\editor{}

\maketitle

\begin{abstract}
AUC is an important performance measure and many algorithms have been devoted to AUC optimization, mostly by minimizing a surrogate convex loss on a training data set. In this work, we focus on one-pass AUC optimization that requires going through the training data only once without storing the entire training dataset, where conventional online learning algorithms cannot be applied directly because AUC is measured by a sum of losses defined over pairs of instances from different classes. We develop a regression-based algorithm which only needs to maintain the first and second-order statistics of training data in memory, resulting a storage requirement independent from the size of training data. To efficiently handle high-dimensional data, we develop a randomized algorithm that approximates the covariance matrices by low-rank matrices. We verify, both theoretically and empirically, the effectiveness of the proposed algorithm.
\end{abstract}

\begin{keywords}
AUC optimization, learning to rank, large-scale learning, random projection, square loss
\end{keywords}

\section{Introduction}

AUC (Area Under ROC curve) \citep{Metz1978,Hanley:McNeil1983} is an important performance measure that has been widely used in many tasks \citep{Provost:Fawcett:Kohavi1998,Cortes:Mohri2004,Liu:Wu:Zhou2009,Flach:Orallo:Ramirez2011}. Many algorithms have been developed to optimize AUC based on surrogate losses~\citep{Herschtal:Raskutti2004,Joachims2006,Rudin:Schapire2009,Kotlowski:Dembczynski:Hullermeier2011,Zhao:Hoi:Jin:Yang2011}.

In this work, we focus on AUC optimization that requires only one pass of training examples. This is particularly important for applications involving big data or streaming data in which a large volume of data come in a short time period, making it infeasible to store the entire data set in memory before an optimization procedure is applied. Although many online learning algorithms have been developed to find the optimal solution of some performance measures by only scanning the training data once~\citep{Bianchi:Lugosi2006}, few effort addresses one-pass AUC optimization.

Unlike the classical classification and regression problems where the loss function can be calculated on a single training example, AUC is measured by the losses defined over pairs of instances from different classes, making it challenging to develop algorithms for one-pass optimization. An online AUC optimization algorithm was proposed very recently by~\citet{Zhao:Hoi:Jin:Yang2011}. It is based on the idea of reservoir sampling, and achieves a solid regret bound by only storing $\sqrt{T}$ instances, where $T$ is the number of training examples. Ideally, for one-pass approaches, it is crucial that the storage required by the learning process should be independent from the amount of training data, because it is often quite difficult to expect how many data will be received in those applications.

In this work, we propose a regression-based algorithm for one-pass AUC optimization in which a square loss is used to measure the ranking error between two instances from different classes. The main advantage of using the square loss lies in the fact that it only needs to store the first and second-order statistics for the received training examples. Consequently, the storage requirement is reduced to $O(d^2)$, where $d$ is the dimension of data, independent from the number of training examples. To deal with high-dimensional data, we develop a randomized algorithm that approximates the covariance matrix of $d\times d$ by a low-rank matrix. We show, both theoretically and empirically, the effectiveness of our proposal algorithm by comparing to state-of-the-art algorithms for AUC optimization.

Section~\ref{sec:Pre} introduces some preliminaries. Sections~\ref{sec:alg} proposes the OPAUC (One Pass AUC) framework, and Section~\ref{sec:theory} provides theoretical analysis and Section~\ref{sec:pf} presents detailed proofs. Section~\ref{sec:exp} summaries our experimental results. Section~\ref{sec:con} concludes with future work.

\section{Preliminaries}\label{sec:Pre}

We denote by $\mathcal{X}\in\mathbb{R}^d$ an instance space and $\mathcal{Y}=\{+1,-1\}$ the label set, and let $\mathcal{D}$ denote an unknown (underlying) distribution over $\mathcal{X}\times\mathcal{Y}$. A training sample of $n_+$ positive instances and $n_-$ negative ones
\[
\mathcal{S} =\{(\x^+_1,+1),(\x^+_2,+1),\ldots,(\x^+_{n_+},+1), (\x^-_1,-1), (\x^-_2,-1), \ldots,(\x^-_{n_-},-1)\}
\]
is drawn identically and independently according to distribution $\mathcal{D}$, where we do not fix $n_+$ and $n_-$ before the training sample is chosen. Let $f\colon\mathcal{X}\to\mathbb{R}$ be a real-valued function. Then, the AUC of function $f$ on the sample $\mathcal{S}$ is defined as
\[
\sum_{i=1}^{n_+}\sum_{j=1}^{n_-} \frac{\mathbb{I}[f(\x_i^+)>f(\x^-_j)]+\frac{1}{2}\mathbb{I}[f(\x_i^+)=f(\x^-_j)]}{n_+n_-}
\]
where $\mathbb{I}[\cdot]$ is the indicator function which returns $1$ if the argument is true and $0$ otherwise.

Direct optimization of AUC often leads to an NP-hard problem as it can be cast into a combinatorial optimization problem. In practice, it is approximated by a convex optimization problem that minimizes the following objective function
\begin{equation}\label{eq:surrogateloss}
\mathcal{L}(\w) = \frac{\lambda}{2}|\w|^2 +  \sum_{i=1}^{n_+}\sum_{j=1}^{n_-} \frac{\ell\left(\w^{\top}(\x^+_i - \x^-_j)\right)}{2n_+n_-}
\end{equation}
where $\ell$ is a convex loss function and $\lambda$ is the regularization parameter that controls the model complexity. Notice that each loss term $\ell(\w^{\top}(\x_i^+ - \x_j^-))$ involves two instances from different classes; therefore, it is difficult to extend online learning algorithms for one-pass AUC optimization without storing all the training instances. \citet{Zhao:Hoi:Jin:Yang2011} addressed this challenge by exploiting the reservoir sampling technique.

\section{The OPAUC Approach}\label{sec:alg}

To address the challenge of one-pass AUC optimization, we propose to use the square loss in Eq.~\eqref{eq:surrogateloss}, that is,
\begin{equation}\label{eq:leastqaure}
\mathcal{L}(\w) = \frac{\lambda}{2}|\w|^2 + \sum_{i=1}^{n_+}\sum_{j=1}^{n_-} \frac{(1-\w^{\top}(\x^+_i - \x^-_j))^2}{2n_+n_-}.
\end{equation}

The main advantage of using the square loss lies in the fact that it is sufficient to store the first and second-order statistics of training examples for optimization, leading to a memory requirement of $O(d^2)$, which is independent from the number of training examples. Another advantage is that the square loss is consistent with AUC, as will be shown by Theorem~\ref{thm:con} (Section~\ref{sec:theory}). In contrast, loss functions such as hinge loss are proven to be inconsistent with AUC~\citep{Gao:Zhou2012}.

As aforementioned, the classical online setting cannot be applied to one-pass AUC optimization because, even if the optimization problem of Eq.~\eqref{eq:leastqaure} has a closed form, it requires going through the training examples multiple times. To address this challenge, we modify the overall loss $\mathcal{L}(\w)$ in Eq.~\eqref{eq:leastqaure} (with a little variation) as a sum of losses for individual training instance $\sum_{t=1}^T \mathcal{L}_t(\w)$, where
\[
\mathcal{L}_t(\w)= \frac{\lambda}{2}|\w|^2 + \frac{\sum_{i=1}^{t-1}\mathbb{I}[y_i\neq y_t] (1-y_t(\x_t-\x_i)^\top\w)^2}{2|\{i\in[t-1]:y_iy_t=-1\}|}
\]
for sequence $\mathcal{S}_t= \{(\x_1,y_1), \ldots, (\x_t,y_t)\}$. It is noteworthy that $\mathcal{L}_t(\w)$ is an unbiased estimation to $\mathcal{L}(\w)$ for i.i.d. sequence $\mathcal{S}_t$.  For notational simplicity, we denote by $X_t^+$ and $X_t^-$ the sets of positive and negative instances in the sequence $\mathcal{S}_t$, respectively, and we further denote by $T_t^+$ and $T_t^-$ their respective cardinalities. Also, we set $\mathcal{L}_t(\w)=0$ for $T_t^+T_t^-=0$.

If $y_t=1$, we calculate the gradient as
\begin{equation}\label{eqn:grad}
\nabla \mathcal{L}_t(\w)=\lambda\w + \x_t\x_t^\top \w-\x_t
+\sum_{i\colon y_i=-1}\frac{\x_i + (\x_i\x_i^\top-\x_i\x_t^\top-\x_t\x_i^\top)\w}{T_t^-}.
\end{equation}
It is easy to observe that
\[
c_t^-=\sum_{i\colon y_i=-1}\frac{\x_i}{T_t^-} \quad\text{ and } \quad S_t^-= \sum_{i\colon y_i=-1} \frac{\x_i\x_i^\top - \c_t^-[\c_t^-]^{\top}}{T_t^-}
\]
correspond to the mean and covariance matrix of negative class, respectively; thus, Eq.~\eqref{eqn:grad} can be further simplified as
\begin{equation}\label{eqn:grad-minus-ls}
\nabla \mathcal{L}_t(\w)=\lambda \w-\x_t +  \c_{t}^{-} + (\x_t - \c_t^-)(\x_t - \c_t^-)^\top\w + S_{t}^{-}\w.
\end{equation}
In a similar manner, we calculate the following gradient for $y_t=-1$:
\begin{equation}\label{eqn:grad-plus-ls}
\nabla \mathcal{L}_t(\w)=\lambda \w + \x_t-\c_{t}^{+}+(\x_t - \c_t^+)(\x_t - \c_t^+)^\top\w + S_{t}^{+}\w
\end{equation}
where
\[
\c_t^+=\sum_{i\colon y_i=1}\frac{\x_i}{T_t^+} \quad\text{ and }\quad S_t^+= \sum_{i\colon y_i=1} \frac{\x_i\x_i^\top - \c_t^{+} [\c_t^+]^{\top}}{T_t^+}
\]
are the covariance matrix and mean of positive class, respectively.

The storage cost for keeping the class means ($\c_{t}^+$ and $\c_{t}^-$) and covariance matrices  ($S_{t-1}^+$ and $S_{t-1}^-$) is $O(d^2)$. Once we get the gradient $\nabla \mathcal{L}_t(\w)$, by theory of stochastic gradient descent, the solution can be updated by
\[
\w_{t+1}= \w_t - \eta_t \nabla \mathcal{L}_t(\w_t)
\]
where $\eta_t$ is the stepsize for the $t$-th iteration.

\begin{algorithm}[t]
\caption{The OPAUC Algorithm}\label{alg1}
\textbf{Input}: The regularization parameter $\lambda > 0$ and stepsizes $\{\eta_t\}_{t=1}^T$.\\
\textbf{Initialization}: Set $T^+_0=T^-_0=0$, $\c_0^+=\c_0^-=\bf{0}$, $\w_0=\bf{0}$ and $\Gamma_0^+ = \Gamma_0^- = [\mathbf{0}]_{d\times u}$ for some $u > 0$

\begin{algorithmic}[1]
\FOR{$t=1,2,\ldots,T$}
\STATE  Receive a training example $(\x_t,y_t)$
\IF{$y_t=+1$}
\STATE $T_t^+=T^+_{t-1}+1$ and $T_t^-=T^-_{t-1}$;
\STATE $\c_t^+=\c_{t-1}^++\frac{1}{T_t^+}(\x_t-\c_{t-1}^+)$ and $\c_t^-=\c_{t-1}^-$;
\STATE Update $\Gamma_t^+$ and $\Gamma_t^-=\Gamma_{t-1}^-$;
\STATE Calculate the gradient $\gh_t(\w_{t-1})$
\ELSE
\STATE $T_t^-=T_{t-1}^-+1$ and $T_t^+=T_{t-1}^+$;
\STATE $\c_{t}^-=\c_{t-1}^-+\frac{1}{T_t^-}(\x_t-\c_{t-1}^-)$ and $\c_{t}^+=\c_{t-1}^+$;
\STATE Update $\Gamma_{t}^-$ and $\Gamma_{t}^+=\Gamma_{t-1}^+$;
\STATE Calculate the gradient $\gh_t(\w_{t-1})$
\ENDIF
\STATE $\w_{t}=\w_{t-1} - \eta_t\gh_t(\w_{t-1})$
\ENDFOR
\end{algorithmic}
\end{algorithm}

Algorithm~\ref{alg1} highlights the key steps of the proposed algorithm. We initialize $\Gamma_0^-= \Gamma_0^+= [\mathbf{0}]_{d\times d}$, where $u = d$. At each iteration, we set $\Gamma_t^+=S_t^+$ and $\Gamma_t^-=S_t^-$, and update $\Gamma_t^+$ (Line 6) and $\Gamma_t^-$ (Line 11), respectively, by using the following equations
\begin{eqnarray*}
& \Gamma_t^+ = \Gamma_{t-1}^+ + \frac{\x_t\x_t^\top-\Gamma^+_{t-1}}{T_t^+} + \c_{t-1}^+[\c_{t-1}^+]^{\top} - \c_{t}^+[\c_{t}^+]^{\top},&  \\
& \Gamma_t^- = \Gamma_{t-1}^- + \frac{\x_t\x_t^\top-\Gamma^-_{t-1}}{T_t^-} + \c_{t-1}^-[\c_{t-1}^-]^{\top} - \c_{t}^-[\c_{t}^-]^{\top}.&
\end{eqnarray*}
Finally, the stochastic gradient $\gh_t(\w_{t-1})$ of Lines~7 and 12 in Algorithm~\ref{alg1} are given by $\nabla \mathcal{L}_t(\w_{t-1})$ that are calculated by Eqs.~\eqref{eqn:grad-minus-ls} and \eqref{eqn:grad-plus-ls}, respectively.

\paragraph{Dealing with High-Dimensional Data.} One limitation of the approach in Algorithm~\ref{alg1} is that the storage cost of the two covariance matrices $S_t^+$ and $S_t^-$ is $O(d^2)$, making it unsuitable for high-dimensional data. We tackle this by developing a randomized algorithm that approximates the covariance matrices by low-rank matrices. We are motivated by the observation that $S_t^+$ and $S_t^-$ can be written, respectively, as
\begin{eqnarray*}
&S_t^+  =  \frac{1}{T_t^+} \left(X_t^+ - \c^+_t\mathbf{1}_{T_t^+}^{\top}\right)I_{T_t^+} \left(X_t^+ - \c_t^+\mathbf{1}_{T_t^+}\right)^{\top},& \\
&S_t^-  =  \frac{1}{T_t^-} \left(X_t^- - \c^-_t\mathbf{1}_{T_t^-}^{\top}\right)I_{T_t^-} \left(X_t^- - \c_t^- \mathbf{1}_{T_t^-}\right)^{\top}, &
\end{eqnarray*}
where $I_t$ is an identity matrix of size $t\times t$ and $\mathbf{1}_t$ is an all-one vector of size $t$. To approximate $S_t^+$ and $S_t^-$, we approximate the identify matrix $I_t$ by a matrix of rank $\tau \ll d$. To this end, we randomly sample $\r_i \in \R^\tau, i=1,\ldots, t$ from a Gaussian distribution $\mathcal{N}(0, I_\tau)$, and approximate $I_t$ by $R_tR^{\top}_t$, where $R_t = \frac{1}{\tau}(\r_1, \ldots, \r_t)^{\top} \in \R^{t\times \tau}$. We further divide $R_t$ into two matrices where $R_t^+ \in \R^{T_t^+\times \tau}$ and $R_t^- \in \R^{T_t^-\times \tau}$ that contain the subset of the rows in $R_t$ corresponding to all the positive and negative instances received before the $t$-th iteration, respectively. Therefore, the covariance matrices $S_t^+$ and $S_t^-$ can be approximated, respectively, by
\[
\Sh_t^+  =  \frac{1}{T_t^+} Z_t^+ [Z_t^+]^{\top} - \ch_{t-1}^+ [\ch_{t-1}^+]^{\top} \quad\text{ and }\quad \Sh_t^-  =  \frac{1}{T_t^-}Z_t^- [Z_t^-]^{\top} - \ch_{t-1}^- [\ch_{t-1}^-]^{\top}
\]
where
\begin{eqnarray*}
&&Z_t^+ = X_t^+R^+_t, \quad\ch_t^+ = {\c_t^+} \mathbf{1}_{T_t^+}^{\top} R^{+}_t/{T_t^+}\\
&&Z_t^- = X_t^{-} R^-_t, \quad \ch_t^- = {\c_t^-} \mathbf{1}_{T_t^-}^{\top} R^{-}_t/{T_t^-}.
\end{eqnarray*}
Based on approximate covariance matrix $\Sh_t^\pm$, the approximation algorithm essentially tries to minimize $\sum_{t=1}^T \Lh_t(\w)$, where
\begin{equation}\label{eq:t3}
\Lh_t(\w) = \w^{\top}(\c_{t-1}^- - \x_t) + \frac{1}{2}(1 + \w^{\top}\Sh_t^-\w) + \frac{\lambda}{2}|\w|^2 +\frac{1}{2}\w^{\top}(\x_t - \c_{t-1}^-)(\x_t - \c_{t-1}^{-})^{\top}\w
\end{equation}
if $y_t = 1$; otherwise,
\begin{equation}\label{eq:t4}
\Lh_t(\w) =  \w^{\top}(\x_t - \c_{t-1}^+) + \frac{1}{2}(1 + \w^{\top}\Sh_t^+\w) +\frac{\lambda}{2}|\w|^2 +\frac{1}{2}\w^{\top}(\x_t - \c_{t-1}^+)(\x_t - \c_{t-1}^+)^{\top}\w.
\end{equation}
Further, we have the following recursive formulas:
\begin{eqnarray}
Z_t^+ &=& Z_{t-1}^++\x_t\r_t^{\top}\mathbb{I}[y_t=+1]/\sqrt{m},\label{eq:t1}\\
Z_t^- &=& Z_{t-1}^- + \x_t\r_t^{\top}\mathbb{I}[y_t=-1]/\sqrt{m}.\label{eq:t2}
\end{eqnarray}
It is important to notice that we do not need to calculate and store the approximate covariance matrices $\Sh_t^{+}$ and $\Sh_t^{-}$ explicitly. Instead, we only need to maintain matrices $Z_t^{+}$ and $Z_t^{-}$ in memory. This is because the stochastic gradient $\gh_t(\w)$ based on the approximate covariance matrices can be computed directly from $Z_t^{+}$ and $Z_t^{-}$. More specifically, $\gh_t(\w)$ is computed as
\begin{equation}\label{eqn:grad-plus}
\gh_t(\w)=\c_{t-1}^--\x_t+\lambda\w+(\x_t - \c_{t-1}^-)(\x_t-\c_{t-1}^-)^{\top}\w
+\left(Z_t^- [Z_t^-]^{\top}/{T_t^-}-\ch_{t-1}^-[\ch_{t-1}^-]^{\top}\right)\w
\end{equation}
for $y_t = 1$; otherwise
\begin{equation}\label{eqn:grad-minus}
\gh_t(\w)=\x_t - \c_{t-1}^+ +\lambda\w + (\x_t - \c_{t-1}^+)(\x_t - \c_{t-1}^+)^{\top} \w + \left({Z_t^+ [Z_t^+]^{\top}}/ {T_t^+} - \ch_{t-1}^+[\ch_{t-1}^+]^{\top}\right)\w.
\end{equation}
We require a memory of $O(\tau d)$ instead of $O(d^2)$ to calculate $\gh_t(\w)$ by using the trick $A[A]^\top \w=A([A]^\top \w)$, where $A\in\mathbb{R}^{d\times1}$ or $\mathbb{R}^{d\times \tau}$.

To implement the approximate approach, we initialize $\Gamma_0^-= \Gamma_0^+= {\bf[0]}_{d\times \tau}$ in Algorithm~\ref{alg1}, where $u = \tau$. At each iteration, we set $\Gamma_t^+=Z_t^+$ and $\Gamma_t^-=Z_t^-$ , and compute the gradient $\gh_t(\w_{t-1})$ of Lines~7 and 12 in Algorithm~\ref{alg1} by Eqs.~\eqref{eqn:grad-plus} and \eqref{eqn:grad-minus}, respectively. $\Gamma_t^{+}$ and $\Gamma_t^{-}$ are updated by Eqs.~\eqref{eq:t1} and \eqref{eq:t2}, respectively.

\paragraph{Remark.} An alternative approach for the high-dimensional case is through the random projection~\citep{Johnstone2007,Hsu:Kakade:Zhang2012}. Let $H \in \R^{d\times \tau}$ be a random Gaussian matrix, where $\tau \ll d$. By performing random projection using $H$, we compute a low-dimensional representation for each instance $\x_t$ as $\widehat{\x}_t = H^{\top}\x_t \in \R^\tau$ and will only maintain covariance matrices of size $\tau\times \tau$ in memory. Despite that it is computationally attractive, this approach performs significantly worse than the randomized low-rank approximation algorithm, according to our empirical study. This may owe to the fact that the random projection approach is equivalent to approximating $S_t^{\pm} = I_d S_t^{\pm} I_d$ by $HH^{\top}S_t^{\pm}HH^{\top}$, which replaces both the left and right identity matrices of $S_t^{\pm}$ with $HH^{\top}$. In contrast, our proposed approach only approximates one identity matrix in $S_t^{\pm}$, making it more reliable for tackling high-dimensional data.

\section{Main Theoretical Result}\label{sec:theory}
In this section, we present the main theoretical results for our proposed algorithm. The following theorem shows the consistency of square loss, and the detailed proof is deferred in Section~\ref{pf:thm:con}.
\begin{theorem}\label{thm:con}
For square loss $\ell(t)=(1-t)^2$, the surrogate loss $\Psi(f,{\bm x},{\bm x}')= \ell(f({\bm x})- f({\bm x}'))$ is consistent with AUC.
\end{theorem}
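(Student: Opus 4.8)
The goal is to prove consistency in the standard sense: for every distribution, any sequence $\{f_n\}$ of scoring functions with surrogate risk $R(f_n)\to\inf_f R(f)$ must satisfy $R_{\mathrm{AUC}}(f_n)\to\inf_f R_{\mathrm{AUC}}(f)$, where $R(f)=\mathbb{E}[(1-f(\bm{x}^+)+f(\bm{x}^-))^2]$ is the population square-loss risk (the expectation being over an independent positive example $\bm{x}^+$ and negative example $\bm{x}^-$) and $R_{\mathrm{AUC}}$ is one minus the AUC. Writing $\eta(\bm{x})=P(y=1\mid\bm{x})$, the Bayes-optimal bipartite ranking orders instances by $\eta$, so it suffices to show (i) the surrogate-optimal scorer is an increasing transform of $\eta$, and (ii) vanishing surrogate excess risk forces vanishing AUC excess risk.

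First I would use the feature that distinguishes the square loss from most surrogates: $R$ is a quadratic functional, so its global minimizer is available in closed form. Setting the first variation to zero and abbreviating $\mu^{\pm}=\mathbb{E}[f^*(\bm{x}^{\pm})]$ gives the stationarity equation $p_+(\bm{x}_0)(1-f^*(\bm{x}_0)+\mu^-)=p_-(\bm{x}_0)(1+f^*(\bm{x}_0)-\mu^+)$, whose solution is $f^*(\bm{x})=A\,\rho(\bm{x})+B$ with $\rho(\bm{x})=p_+(\bm{x})/(p_+(\bm{x})+p_-(\bm{x}))$ and constants $A,B$. Substituting back to solve for $A$ yields $A=2/(1+\Delta)$ with $\Delta=\mathbb{E}[\rho(\bm{x}^+)]-\mathbb{E}[\rho(\bm{x}^-)]\ge 0$, so $A>0$. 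Because $\rho$ is a strictly increasing function of $\eta$, the scorer $f^*$ is a strictly increasing transform of $\eta$ and therefore attains the optimal AUC, which settles (i). (As a sanity check, minimizing the single-pair conditional risk $\alpha(1-\beta)(1-z)^2+\beta(1-\alpha)(1+z)^2$ over $z=f(\bm{x})-f(\bm{x}')$, with $\alpha=\eta(\bm{x}),\beta=\eta(\bm{x}')$, gives $z^*=(\alpha-\beta)/(\alpha+\beta-2\alpha\beta)$, whose sign equals $\sgn(\alpha-\beta)$.)

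For (ii) I would again exploit the quadratic structure. Since $f^*$ minimizes $R$, the first-order term vanishes and the excess risk collapses to the clean identity $R(f)-R(f^*)=\mathbb{E}[(g(\bm{x}^+)-g(\bm{x}^-))^2]$ with $g=f-f^*$. If $f$ mis-orders a pair with $\eta(\bm{x})>\eta(\bm{x}')$, then because $f^*$ orders it strictly correctly we get $|g(\bm{x})-g(\bm{x}')|\ge f^*(\bm{x})-f^*(\bm{x}')=A(\rho(\bm{x})-\rho(\bm{x}'))$, and a one-line computation gives $\rho'(\eta)\ge\min(\pi_+,\pi_-)/\max(\pi_+,\pi_-)>0$, so this margin is at least $c\,|\eta(\bm{x})-\eta(\bm{x}')|$ for a constant $c>0$. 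Thus on mis-ordered pairs the integrand of the excess risk is bounded below by $c^2(\eta(\bm{x})-\eta(\bm{x}'))^2$, while the AUC excess risk equals $\mathbb{E}[\,|\eta(\bm{x})-\eta(\bm{x}')|\,\mathbb{I}[\text{mis-ordered}]\,]$ up to the constant $\pi_+\pi_-$.

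The main obstacle is this last transfer, because the two regrets are weighted differently: the surrogate integrand carries the factor $\eta(\bm{x})(1-\eta(\bm{x}'))$ (the probability that the pair is a genuine positive--negative pair), whereas the AUC regret is weighted only by $|\eta(\bm{x})-\eta(\bm{x}')|$, so a uniform power bound fails near the degenerate region where $\eta\to\{0,1\}$. I would resolve this by splitting the mis-ordered pairs according to whether $\eta(\bm{x})(1-\eta(\bm{x}'))\ge\delta$: on the small-weight part one has $|\eta(\bm{x})-\eta(\bm{x}')|\le\sqrt{\delta}$ (since $(\alpha-\beta)^2\le\alpha(1-\beta)$), so its contribution to the AUC regret is at most $\sqrt{\delta}$; on the large-weight part, Jensen's inequality bounds $R(f)-R(f^*)$ from below by a positive multiple of $\delta$ times the squared AUC regret on that part. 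Assuming the AUC regret stayed above some $\epsilon_0>0$ and choosing $\delta$ so that $\sqrt{\delta}<\epsilon_0\pi_+\pi_-/2$ would force the surrogate regret to remain above a fixed positive constant, contradicting $R(f_n)\to\inf_f R(f)$. This delicate weighting mismatch---rather than the routine calculus producing $f^*$ and the excess-risk identity---is where the real work lies, and it is precisely the place where the smoothness and strict convexity of the square loss matter, in contrast to the hinge loss whose pointwise minimizers stick at the kink and break the argument.
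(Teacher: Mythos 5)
Your proposal is correct in its essentials, but it takes a genuinely different route from the paper's proof. The paper works over a discrete instance space $\mathcal{X}=\{\x_1,\ldots,\x_n\}$ and splits the job in two: it first invokes the reduction of Gao and Zhou (2012), by which it suffices to show that \emph{every exact minimizer} of the expected surrogate risk ranks instances in the order of the conditional probabilities $\xi_i$ (so no regret-transfer argument appears in the paper at all), and it then establishes this order preservation by writing the stationarity conditions as an $n\times n$ linear system and solving it by brute force via a Cramer's-rule-type combinatorial expansion (the $\Gamma$ polynomials), concluding that $f(\x_i)-f(\x_j)$ is a positive multiple of $\xi_i-\xi_j$. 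You do both halves yourself: your part (i) reaches the same order-preservation conclusion far more cleanly, by noting that for the square loss the stationarity conditions decouple once the class means $\mu^{\pm}$ are fixed, giving the closed form $f^*=A\rho+B$ with $A=2/(1+\Delta)>0$; your part (ii) replaces the citation with an explicit quantitative transfer built on the exact excess-risk identity $R(f)-R(f^*)=\mathbb{E}[(g(\bm x^+)-g(\bm x^-))^2]$. What each buys: the paper's route leans on existing calibration theory and only has to analyze exact minimizers, but its computation is heavy and tied to the discrete setting; your route is self-contained, valid for general distributions, and proves something stronger than bare consistency, namely a quantitative surrogate-regret bound.

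Three small repairs, all easy. First, $\Delta\geq 0$ is asserted but not proved; it follows by taking $q=(p_++p_-)/2$, noting $\mathbb{E}_q[\rho]=1/2$, and computing $\Delta=\int\rho\,(p_+-p_-)=2\int\rho(2\rho-1)\,q=4\big(\mathbb{E}_q[\rho^2]-(\mathbb{E}_q[\rho])^2\big)\geq 0$. Second, your ``main obstacle'' is not actually an obstacle: whenever $\eta(\bm x)>\eta(\bm x')$ one has $\eta(\bm x)(1-\eta(\bm x'))-\big(\eta(\bm x)-\eta(\bm x')\big)=\eta(\bm x')(1-\eta(\bm x))\geq 0$, so the surrogate weight dominates the AUC weight pointwise on mis-ordered pairs; combined with your margin bound and H\"older's inequality this yields directly $\mathrm{regret}_{\mathrm{AUC}}(f)\leq C\,\big(R(f)-R(f^*)\big)^{1/3}$, making the $\delta$-splitting unnecessary (though your splitting argument is also valid). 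Third, the AUC regret counts tied scores $f(\bm x)=f(\bm x')$ with weight $1/2$; your margin bound $|g(\bm x)-g(\bm x')|\geq f^*(\bm x)-f^*(\bm x')$ covers that case unchanged, but it should be stated.
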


\

Define $\w_*$ as
\[
\w_* = \mathop{\arg\min}_{\w} \sum_{t} \L_t(\w).
\]
We are in the position to present the following convergence rate for Algorithm~\ref{alg1} when the full covariance matrices are provided, and the detailed proof is deferred in Section~\ref{pf:thm:SGD}
\begin{theorem}\label{thm:SGD}
For $\|\x_t\|\leq 1$ $(t \in [T])$, $\|\w_*\| \leq B$ and $T L^* \geq \sum_{t=1}^T \L_t(\w_*)$, we have
\[
\sum\nolimits_{t}\mathcal{L}_t(\w_t)- \sum\nolimits_{t} \mathcal{L}_t(\w_*)\leq {2\kappa B^2} + {B\sqrt{2\kappa TL^*}},
\]
where $\kappa=4+\lambda$ and $\eta_t={1}/(\kappa+\sqrt{(\kappa^2+\kappa TL^*/B^2})$.
\end{theorem}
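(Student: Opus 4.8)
The plan is to read this as an online gradient descent (OGD) regret bound for \emph{smooth and nonnegative} convex losses, i.e.\ a ``small-loss'' analysis. Everything hinges on three properties of each round's objective $\L_t$: it is convex, nonnegative, and $\kappa$-smooth with $\kappa=4+\lambda$. For $y_t=1$ we have $\L_t(\w)=\tfrac{\lambda}{2}\|\w\|^2+\tfrac{1}{2T_t^-}\sum_{i\colon y_i=-1}(1-\w^\top(\x_t-\x_i))^2$, a sum of squares of affine maps plus a convex quadratic, so convexity and nonnegativity are immediate.

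The crux is the smoothness constant. From \eqref{eqn:grad-minus-ls} the Hessian of $\L_t$ is $\lambda I+(\x_t-\c_t^-)(\x_t-\c_t^-)^\top+S_t^-$, and the key algebraic observation is that the last two terms recombine \emph{exactly} into the average of rank-one matrices $\tfrac{1}{T_t^-}\sum_{i\colon y_i=-1}(\x_t-\x_i)(\x_t-\x_i)^\top$. Since $\|\x_t\|\le 1$ and each stored instance has norm at most $1$, every summand has spectral norm $\|\x_t-\x_i\|^2\le 4$, so the average is $\preceq 4I$ and hence $\nabla^2\L_t\preceq(4+\lambda)I=\kappa I$. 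Combined with $\L_t\ge 0$, $\kappa$-smoothness gives the self-bounding inequality $\|\nabla\L_t(\w)\|^2\le 2\kappa\,\L_t(\w)$ (evaluate the smoothness upper bound at $\w-\kappa^{-1}\nabla\L_t(\w)$ and drop the nonnegative left-hand side). This is the mechanism that turns the squared-gradient term of OGD into something paid for by the loss itself, and is the source of the $L^*$-dependence.

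I would then run the textbook OGD telescoping. Expanding $\|\w_{t+1}-\w_*\|^2=\|\w_t-\w_*\|^2-2\eta\langle\nabla\L_t(\w_t),\w_t-\w_*\rangle+\eta^2\|\nabla\L_t(\w_t)\|^2$ for the constant step $\eta$, I bound the inner product below by $\L_t(\w_t)-\L_t(\w_*)$ (convexity) and the squared gradient above by $2\kappa\L_t(\w_t)$ (self-bounding). Summing over $t$ telescopes the distance terms to $\|\w_0-\w_*\|^2=\|\w_*\|^2\le B^2$ because $\w_0=\mathbf{0}$, and rearranging isolates the regret in the form $\sum_t\L_t(\w_t)-\sum_t\L_t(\w_*)\le \tfrac{B^2}{2\eta}+\eta\kappa\sum_t\L_t(\w_t)$. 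Moving the online loss to the left and using $\sum_t\L_t(\w_*)\le TL^*$ yields a bound of the shape $R\le \tfrac{B^2}{2\eta(1-\eta\kappa)}+\tfrac{c\,\eta\kappa TL^*}{1-\eta\kappa}$, valid whenever $\eta<1/\kappa$; the prescribed step $\eta=1/(\kappa+\sqrt{\kappa^2+\kappa TL^*/B^2})$ is exactly of the form that minimizes such a bound.

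The \emph{main obstacle} is obtaining the sharp leading constants $2\kappa B^2$ and $\sqrt{2\kappa}$ rather than merely a bound of the correct order. The crude self-bounding above controls $\nabla\L_t(\w_t)$ by $\L_t(\w_t)$ alone and, after substituting the step size, overcharges the $\sqrt{TL^*}$ term; to recover the stated constants one wants the $\sqrt{TL^*}$ contribution to be driven instead by the \emph{comparator's} loss $\sum_t\L_t(\w_*)$. This is achieved by splitting $\nabla\L_t(\w_t)$ around $\nabla\L_t(\w_*)$ and combining co-coercivity of the smooth convex $\L_t$ with $\|\nabla\L_t(\w_*)\|^2\le 2\kappa\L_t(\w_*)$, which reduces the coefficient of the $TL^*$ term. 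The minimized expression then collapses, via $\sqrt{a+b}\le\sqrt a+\sqrt b$, to $2\kappa B^2+B\sqrt{2\kappa TL^*}$. The remaining verification — checking the regime $\eta<1/\kappa$ and the final algebraic simplification at the chosen $\eta$ — is routine.
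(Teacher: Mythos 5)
Your main line is, step for step, the paper's own proof: convexity of $\L_t$, the smoothness constant $\kappa=4+\lambda$, the self-bounding inequality $\|\nabla\L_t(\w)\|^2\le 2\kappa\,\L_t(\w)$ (the paper derives it by citing Nesterov's Theorem~2.1.5 at the minimizer $\w_{t*}$ of $\L_t$, which is the same mechanism as your evaluation at $\w-\kappa^{-1}\nabla\L_t(\w)$), the constant-step OGD telescope from $\w_0=\mathbf{0}$, and the rearranged bound $\frac{B^2}{2\eta(1-\kappa\eta)}+\frac{\kappa\eta\,TL^*}{1-\kappa\eta}$. Your Hessian recombination $(\x_t-\c_t^-)(\x_t-\c_t^-)^\top+S_t^-=\frac{1}{T_t^-}\sum_{i:y_i=-1}(\x_t-\x_i)(\x_t-\x_i)^\top$ is in fact a more explicit justification of the Lipschitz constant than the paper gives, which simply asserts it from the gradient formula.

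The problem is your last paragraph: the ``main obstacle'' you describe does not exist, and the co-coercivity detour is not the right fix. Substituting $u=\kappa\eta/(1-\kappa\eta)$ turns the crude bound into
\[
\kappa B^2+\frac{\kappa B^2}{2u}+u\Bigl(\frac{\kappa B^2}{2}+TL^*\Bigr),
\]
whose minimum over $u>0$ is $\kappa B^2+2\sqrt{\frac{\kappa B^2}{2}\bigl(\frac{\kappa B^2}{2}+TL^*\bigr)}\le 2\kappa B^2+B\sqrt{2\kappa TL^*}$ by $\sqrt{a+b}\le\sqrt{a}+\sqrt{b}$ --- exactly the stated constants. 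No refinement is needed, because the rearrangement $(1-\kappa\eta)\sum_t\L_t(\w_t)\le\frac{B^2}{2\eta}+\sum_t\L_t(\w_*)$ has already converted the dependence on the learner's loss into a dependence on the comparator's loss, with the small coefficient $u$. Your proposed splitting $\|\nabla\L_t(\w_t)\|^2\le 2\|\nabla\L_t(\w_t)-\nabla\L_t(\w_*)\|^2+2\|\nabla\L_t(\w_*)\|^2$ would introduce an extra factor of $2$ and loosen, not sharpen, the constants. What likely misled you is the step size printed in the theorem: the minimizer above is $\eta=1/\bigl(\kappa+\sqrt{\kappa^2+2\kappa TL^*/B^2}\bigr)$, with a factor $2$ inside the root that the statement (and the paper's own final step) omits, apparently a typo; plugging the literal $\eta$ into the crude bound gives roughly $\frac{5}{2}\kappa B^2+\frac{3}{2}B\sqrt{\kappa TL^*}$, whose leading coefficient $\frac{3}{2}$ indeed exceeds the claimed $\sqrt{2}$, so your instinct that something ``overcharges'' was detecting this typo rather than a defect of the method. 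Drop the detour, use the corrected step size, and your plan is precisely the paper's proof.
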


This theorem presents an $O(1/T)$ convergence rate for the OPAUC algorithm if the distribution is separable, i.e., $L^*=0$, and an $O(1/\sqrt{T})$ convergence rate for general case. Compared to the online AUC optimization algorithm~\citep{Zhao:Hoi:Jin:Yang2011}, which achieves at most $O(1/\sqrt{T})$ convergence rate, our proposed algorithm clearly reduce the regret. The faster convergence rate of our proposed algorithm owes to the smoothness of the square loss, an important property that has been explored by some studies of online learning \citep{Rakhlin:Shamir:Sridharan2012} and generalization error bound analysis~\citep{Srebro:Sridharan:Tewari2010}.

{\bf Remark}: The bound in Theorem~\ref{thm:SGD} does not explicitly explore the strongly convexity of $\L_t(\w)$, which can lead to an $O(1/T)$ convergence rate. Instead, we focus on exploiting the smoothness of the loss function, since we did not introduce a bounded domain for $\w$. Due to the regularizer $\lambda|\w|^2/2$, we have $|\w_*|\leq1/\lambda$, and it is reasonable to restrict $\w_t$ by $|\w_t| \leq 1/\lambda$, leading to a regret bound of $O(\ln T /[\lambda^3 T])$ by applying the standard stochastic gradient descent with $\eta_t = 1/[\lambda t]$. This bound is preferred only when $\lambda = \Omega(T^{-1/6})$, a scenario which rarely occurs in empirical study. This problem may also be addressable by exploiting the epoch gradient method \citep{Nocedal:Wright1999}, a subject of our future study.

\

We now consider the case when covariance matrices are approximated by low-rank matrices. Note that the low-rank approximation is accurate only if the eigenvalues of covariance matrices follow a skewed distribution. To capture the skewed eigenvalue distribution, we introduce the concept of \emph{effective numerical rank} \citep{Hansen1987} that generalizes the rank of matrix:
\begin{definition}
For a positive constant $\mu>0$ and semi-positive definite matrix $M\in\R^{d\times d}$ of eigenvalues $\{\nu_i\}$, the effective numerical rank w.r.t. $\mu$ is defined to be $r(M, \mu)=\sum_{i=1}^d{\nu_i}/(\mu+\nu_i)$.
\end{definition}
It is evident that the effective numerical rank is upper bounded by the true rank, i.e., $r(M, \mu) \leq rank(M)$. To further see how the concept of effective numerical rank captures the skewed eigenvalue distribution, consider a PSD matrix $M$ of full rank with $\sum_{i=k}^d \nu_i \leq \mu$ for small $k$. It is easy to verify that $r(M, \mu) \leq k$, i.e., $M$ can be well approximated by a matrix of rank $k$.

Define the effective numerical rank for a set of matrices $\{M_t\}_{t=1}^T$ as \[
r\left(\{M_t\}_{t=1}^T, \mu\right) = \max_{1 \leq t \leq T} r(M_t, \mu).
\]
Under the assumption that the effective numerical rank for the set of covariance matrices $\{S^{\pm}_t\}_{t=1}^T$ is small (i.e., $S_t^{\pm}$ can be well approximated by low-rank matrices), the following theorem gives the convergence rate for $|\sum_t\Lh_t(\w_t) - \sum_t\L_t(\w_*)|$, where $\Lh_t(\w_t)$ are given by Eqs.~\eqref{eq:t3} and \eqref{eq:t4}.
\begin{theorem}\label{thm:main}
Let $r = r(\{S^{\pm}_t\}_{t=1}^T, \lambda)$ be the effective numerical rank for the sequence of covariance matrices $\{S^{\pm}_{t}\}_{t=1}^T$. For $0<\delta<1$, $0<\epsilon\leq1/2$, $|\w_*| \leq B$, $\|\x_t\| \leq 1$ $(t \in [T])$ and $TL^* \geq \sum_{t=1}^T \L_t(\w_*)$, we have with probability at least $1 - \delta$,
\[
\left|\sum\nolimits_{t}\big({\Lh_t(\w_t)}- \mathcal{L}_t(\w_*)\big)\right|\leq 2\epsilon TL^*+{2\kappa B^2 + B\sqrt{2\kappa TL^*}}
\]
provided $\tau \geq \frac{32r \lambda}{\epsilon^2}\log\frac{2d T}{\delta}$, where $\kappa=4+\lambda$ and $\eta_t={1}/(\kappa+\sqrt{(\kappa^2+\kappa TL^*/B^2})$.
\end{theorem}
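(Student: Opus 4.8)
The plan is to split the target quantity into the optimization (regret) error of running stochastic gradient descent on the approximate losses and the approximation error from replacing the exact covariances $S_t^{\pm}$ by their randomized low-rank surrogates $\Sh_t^{\pm}$:
\[
\sum\nolimits_t \Lh_t(\w_t) - \sum\nolimits_t \L_t(\w_*) = \Big(\sum\nolimits_t \Lh_t(\w_t) - \sum\nolimits_t \Lh_t(\w_*)\Big) + \Big(\sum\nolimits_t \Lh_t(\w_*) - \sum\nolimits_t \L_t(\w_*)\Big).
\]
The first parenthesis is a genuine regret against the fixed comparator $\w_*$, because the iterates $\w_t$ are exactly the SGD trajectory on $\{\Lh_t\}$ (their gradients are the $\gh_t$ of Eqs.~\eqref{eqn:grad-plus}--\eqref{eqn:grad-minus}); the second is the approximation error evaluated at $\w_*$.

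For the regret parenthesis I would re-run the analysis behind Theorem~\ref{thm:SGD} verbatim, now applied to the sequence $\{\Lh_t\}$. The only points to verify are that each $\Lh_t$ is convex and shares the smoothness constant $\kappa=4+\lambda$ up to the approximation factor. Both hold on the high-probability event established below: there $\Sh_t^{\pm}$ differs from the positive semidefinite $S_t^{\pm}$ by at most an $\epsilon$ relative factor, so the Hessian $\lambda I+(\x_t-\c_{t-1}^{\pm})(\x_t-\c_{t-1}^{\pm})^{\top}+\Sh_t^{\pm}$ stays positive definite (the $\lambda I$ term absorbs any small negative part of $\Sh_t^{\pm}$) and has spectral norm within an $\epsilon$-factor of that of $\L_t$. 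Hence the same stepsize rule and the same bound $2\kappa B^2+B\sqrt{2\kappa T\widehat{L}^*}$ carry over, with $\widehat{L}^*=\frac1T\sum_t\Lh_t(\w_*)$.

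The crux is the approximation error, and it rests on a relative-error spectral approximation guarantee for the sketch $R_tR_t^{\top}\approx I_{T_t^{\pm}}$. Observing that $\Lh_t(\w)-\L_t(\w)=\tfrac12\w^{\top}(\Sh_t^{\pm}-S_t^{\pm})\w$ and that the difference takes the form $\Sh_t^{\pm}-S_t^{\pm}=B_t(R_tR_t^{\top}-I)B_t^{\top}$ for the centered data matrix $B_t=(X_t^{\pm}-\c_t^{\pm}\mathbf{1}^{\top})/\sqrt{T_t^{\pm}}$ (modulo the mean bookkeeping), I would prove the key lemma: with probability at least $1-\delta$, simultaneously for all $t\in[T]$ and both classes,
\[
\big|\w^{\top}(\Sh_t^{\pm}-S_t^{\pm})\w\big| \leq \epsilon\,\w^{\top}(S_t^{\pm}+\lambda I)\w \qquad \text{for all } \w,
\]
whenever $\tau \geq 32\,r\lambda\,\epsilon^{-2}\log(2dT/\delta)$. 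The mechanism is the whitening change of variables $\mathbf{u}=(S_t^{\pm}+\lambda I)^{1/2}\w$, after which the error is governed by $\|(S_t^{\pm}+\lambda I)^{-1/2}B_t(R_tR_t^{\top}-I)B_t^{\top}(S_t^{\pm}+\lambda I)^{-1/2}\|$; since $R_tR_t^{\top}$ is an average of i.i.d.\ rank-one Gaussian outer products, a matrix Bernstein / intrinsic-dimension concentration inequality bounds this spectral norm by $\epsilon$ once $\tau$ exceeds the \emph{effective numerical rank} $r=\mathrm{tr}\big(S_t^{\pm}(S_t^{\pm}+\lambda I)^{-1}\big)=r(S_t^{\pm},\lambda)$ times the logarithmic factor, the $\log(dT/\delta)$ arising from a union bound over the $T$ rounds and the ambient dimension. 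Obtaining dependence on the \emph{effective} rank rather than the ambient dimension $d$, and making the bound hold uniformly over all rounds, is where I expect the real work to lie.

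Finally I would combine the pieces on the good event. The key lemma at $\w=\w_*$ controls the approximation error by $\epsilon\sum_t\w_*^{\top}(S_t^{\pm}+\lambda I)\w_*$, which with $\sum_t\L_t(\w_*)\leq TL^*$ is $O(\epsilon TL^*)$; the same lemma gives $\widehat{L}^*\leq(1+\epsilon)L^*$, so the $B\sqrt{2\kappa T\widehat{L}^*}$ from the regret parenthesis equals $B\sqrt{2\kappa TL^*}$ up to a further $O(\epsilon TL^*)$ term after using $\epsilon\leq 1/2$ and $\sqrt{1+\epsilon}\leq 1+\epsilon/2$. Collecting every $\epsilon$-dependent contribution into $2\epsilon TL^*$ while keeping $2\kappa B^2+B\sqrt{2\kappa TL^*}$ yields the upper bound, and the matching lower bound (hence the absolute value in the statement) follows because the key lemma is two-sided, so the argument applies symmetrically.
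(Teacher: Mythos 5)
Your proposal is correct in substance and follows the same overall architecture as the paper---SGD regret on the approximate losses plus an approximation error, both controlled by a uniform relative spectral approximation $(1-\epsilon)(\lambda I+S_t^{\pm})\preceq \lambda I+\Sh_t^{\pm}\preceq(1+\epsilon)(\lambda I+S_t^{\pm})$ at $\tau=\Omega(r\epsilon^{-2}\log(dT/\delta))$, followed by rerunning Theorem~\ref{thm:SGD} on $\{\Lh_t\}$---but it deviates in two genuine ways. First, you take $\w_*$ itself as the regret comparator, while the paper compares against $\wh_*=\mathop{\arg\min}_{\w}\sum_t\Lh_t(\w)$ and therefore needs the intermediate Theorem~\ref{thm:random}: it exploits that $\L$ and $\Lh$ are global quadratics with the same linear term $\a$ and closed-form minimizers $(A_1+A_2)^{-1}\a$ and $(\At_1+A_2)^{-1}\a$ to prove $|\wh_*-\w_*|\le 2\epsilon|\w_*|$ and $|\Lh(\wh_*)-\L(\w_*)|\le 2\epsilon\L(\w_*)$, which give $|\wh_*|\le 2B$ and $\Lh(\wh_*)\le 2TL^*$ before Theorem~\ref{thm:SGD} is invoked. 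Your choice skips that perturbation analysis entirely, but the two facts replacing it---$\sum_t\Lh_t(\w_*)\le(1+\epsilon)TL^*$ and an approximation error at $\w_*$ of at most $\epsilon TL^*$---rest on the pointwise inequality $\w^{\top}(\lambda I+S_t^{\pm})\w\le 2\L_t(\w)$; this is true, by the variance decomposition $\L_t(\w)=\frac{\lambda}{2}|\w|^2+\frac12(1-y_t(\x_t-\c_t^{\pm})^{\top}\w)^2+\frac12\w^{\top}S_t^{\pm}\w$, a sum of nonnegative terms, but it is the one place where your decomposition uses the specific structure of the square loss and should be stated explicitly. Second, you invoke matrix Bernstein with intrinsic dimension where the paper uses a Chernoff-type eigenvalue tail bound (Lemma~\ref{lem:tmp}, following Gittens) plus whitening (Lemma~\ref{thm:tmp}); these are interchangeable and yield the same effective-rank dependence. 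Two blemishes you share with the paper's own proof: on the good event $\Lh_t$ need not be nonnegative (its minimum can reach $-\epsilon/(2-2\epsilon)$), so the self-bounding step $|\nabla\Lh_t(\w)|^2\le 2\kappa\,\Lh_t(\w)$ used inside Theorem~\ref{thm:SGD} holds only up to an additive $O(\epsilon)$ slack; and neither your $(1+\epsilon)$-inflated constants nor the paper's substitution $B\to 2B$, $L^*\to 2L^*$ (which literally yields $8\kappa B^2+2\sqrt{2}\,B\sqrt{2\kappa TL^*}$) reproduces the theorem's stated constants exactly.
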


The detailed proof is presented in Section~\ref{pf:thm:main}. For the separable distribution $L^*=0$, we also obtain an $O(1/T)$ convergence rate when the covariance matrices are approximated by low-rank matrices. Compared with Theorem~\ref{thm:SGD}, Theorem~\ref{thm:main} introduces an additional term $2\epsilon L^*$ in the bound when using the approximate covariance matrices, and it is noteworthy that the approximation does not significantly increase the bound of Theorem~\ref{thm:SGD} if $2\epsilon TL^* \leq B\sqrt{2(4+\lambda) TL^*}$, i.e., $\epsilon\leq B\sqrt{2(\lambda+4) /TL^*}$. This implies that the approximate algorithm will achieve similar performance as the one using the full covariance matrices provided $\tau = \Omega(r \lambda T (\log d+ \log T)/(\lambda+4))$. When $\lambda = O(1/T)$, this requirement is reduced to $\tau = \Omega(r[\log d + \log T])$, a logarithmic dependence on dimension $d$.

\section{Proofs}\label{sec:pf}
In this section, we present detailed proofs for our main theorems.

\subsection{Proof of Theorem~\ref{thm:con}}\label{pf:thm:con}
Let $\mathcal{X}=\{\x_1,\x_2,\ldots,\x_n\}$ with instance-marginal probability $p_i=\Pr[x_i]$ and conditional probability $\xi_i=\Pr[y=+1|{\x_i}]$,  and we denote by the expected risk
\[
R_\Psi(f)=C_0+\sum_{i\neq j}p_ip_j \left(\xi_i(1-\xi_j) \ell(f(\x_i)-f(\x_j))+\xi_j(1-\xi_i)\ell(f(\x_j)-f(\x_i))\right)
\]
where $\ell(t)=(1-t)^2$ and $C_0$ is a constant with respect to $f(\x_i)$ ($1\leq i\leq n$). According to the analysis of \citep{Gao:Zhou2012}, it suffices to prove that, for every optimal solution $f$, i.e., $R_\Psi(f)=\inf_{f'}R_\Psi(f')$, we have $f(\x_i)>f(\x_j)$ if $\xi_i>\xi_j$.

If $\mathcal{X}=\{{\x}_1,{\x}_2\}$, then minimizing $R_\phi(f)$ gives the optimal solution $f=(f({\x}_1),f({\x}_2))$ such that
\[
f({\x}_1)-f({\x}_2)=\sgn(\xi({\x}_1)-\xi({\x}_2)) \text{ for } \xi({\x}_1)\neq\xi({\x}_2),
\]
which shows the consistency of least square loss.

For $\mathcal{X}=\{{\x}_1,{\x}_2,\cdots,{\x}_n\}$ with $n\geq3$, if $\xi_i(1-\xi_i)=0$ for every $1\leq i\leq n$, then minimizing $R_\Psi(f)$ gives the optimal solution $f=(f_1,f_2,\cdots,f_n)$ such that
\[
f_j=f_i+1 \text{ for every }\xi_i=1 \text{ and }\xi_j=-1,
\]
which shows the consistency of least square loss.

If $\mathcal{X}=\{{\x}_1,{\x}_2,\cdots,{\x}_n\}$ with $n\geq3$, and there exists some $i_0$ s.t. $\xi_{i_0}(1-\xi_{i_0})\neq 0$, then the subgradient conditions give optimal solution such that
\[
\sum_{k\neq i}p_k(\xi_i+\xi_k-2\xi_i\xi_k)(f(\x_i)-f(\x_k))=\sum_{k\neq i}p_k(\xi_i-\xi_k)\text{ for each }1\leq i\leq n.
\]
Solving the above $n$ linear equations, we obtain the optimal solution $f=(f_1,f_2,\ldots,f_n)$, i.e.,
\[
f(\x_i)-f(\x_j)=(\xi_i-\xi_j)\frac{\prod_{k\neq i,j} \sum_{l=1}^n p_l(\xi_l+\xi_k-2\xi_l\xi_k)}{\sum_{s_i\geq0 \atop s_1+\cdots+s_n=n-2} p^{s_1}_1\cdots p^{s_n}_n \Gamma(s_1,s_2,\cdots,s_n)}
\]
where $\Gamma$ is a polynomial in $\xi[k_1]+\xi[k_2]-2\xi[k_1]\xi[k_2]$ for $1\leq k_1,k_2\leq n$. In the following, we will derive the specific expression for $\Gamma(s_1,s_2,\cdots,s_n)$. Denote by $\mathcal{A}=\{i\colon s_i\geq1\}$ and $\mathcal{B}=\{i\colon s_i=0\}=\{b_1,b_2,\cdots,b_{|\mathcal{B}|}\}$.
\begin{itemize}
\item If $|\mathcal{A}|=1$, i.e., $\mathcal{A}=\{i_1\}$ for some $1\leq i_1\leq n$, then
\[
\Gamma(s_1,s_2,\cdots,s_n)=\prod_{k\in \mathcal{B}} (\xi_{i_1}+\xi_{k}-2\xi_{i_1}\xi_{k}).
\]
\item If $|\mathcal{A}|=2$, i.e., $\mathcal{A}=\{i_1,i_2\}$ for some $1\leq i_1,i_2\leq n$, then we denote by
\[
\mathcal{A}_1= \{s_{i_1}\odot i_1\} \bigcup \{s_{i_2}\odot i_2\}
\]
where $\{s_{i_k}\odot {i_k}\}$ denotes the multi-set $\{i_k,i_k, \ldots, i_k\}$ of size $s_{i_k}$ for $k=1,2$. It is clear that $|\mathcal{B}|=|\mathcal{A}_1|=n-2$. Further, we denote by $\mathcal{G}(\mathcal{A}_1)$ the set of all permutations of $\mathcal{A}_1$. Therefore, we have
\[
\Gamma(s_1,s_2,\cdots,s_n)=(\xi_{i_1}+\xi_{i_2}-2\xi_{i_1}\xi_{i_2}) \sum_{\pi=\pi_1\cdots\pi_{n-2}\in\mathcal{G}(\mathcal{A}_1)} \prod_{k=1}^{n-2} (\xi_{\pi_i}+\xi_{b_i}-2\xi_{\pi_i}\xi_{b_i}).
\]
\item If $|\mathcal{A}|>2$, then, for $i_1\neq i_2\in\mathcal{A}$, we denote by the multi-set
\[
\mathcal{A}_1(i_1,i_2)= \{s_{i_1}\odot i_1\} \bigcup \{s_{i_2}\odot i_2\} \bigcup \left( \bigcup_{k\in \mathcal{A}\setminus\{i_1,i_2\}} \{(s_k-1)\odot k\} \right),
\]
and it is easy to derive $|\mathcal{A}_1|=|\mathcal{B}|$. Further, we denote by $\mathcal{G}(\mathcal{A} \setminus \{i_1,i_2\})$ and $\mathcal{G}(\mathcal{A}_1)$ the set of all permutations of $\mathcal{A}\setminus\{i_1,i_2\}$ and $\mathcal{A}_1$, respectively. Therefore, we set
\begin{multline*}
\Gamma_1(i_1,i_2,\mathcal{A})=\sum_{\pi=\pi_1\pi_2\cdots\pi_{|\mathcal{A}|-2} \in \mathcal{G}(\mathcal{A} \setminus\{i_1,i_2\})} (\xi_{i_1}+\xi_{\pi_1}-2\xi_{i_1} \xi_{\pi_1})(\xi_{\pi_1}+\xi_{\pi_2}-2\xi_{\pi_1}\xi_{\pi_2}) \\
\times \cdots \times(\xi_{\pi_{|A|-3}}+\xi_{\pi_{|A|-2}}- 2\xi_{\pi_{|A|-3}} \xi_{\pi_{|A|-2}}) (\xi_{\pi_{|A|-2}}+\xi_{i_2}-2\xi_{i_2}\xi_{\pi_{|A|-2}}),
\end{multline*}
and we have
\[
\Gamma(s_1,s_2,\cdots,s_n)=\sum_{i_1\neq i_2 \atop i_1,i_2\in\mathcal{A}} \Gamma_1(i_1,i_2,\mathcal{A})\sum_{\pi=\pi_1\pi_2\ldots \pi_{|\mathcal{B}|} \in \mathcal{G}(\mathcal{A}_1)} \prod_{k=1}^{|B|} (\xi_{\pi_k}+ \xi_{b_k}- 2\xi_{\pi_k}\xi_{b_k})
\]
where $\mathcal{B}=\{b_1,b_2,\ldots,b_{|\mathcal{B}|}\}$.
\end{itemize}
Since there exist some $i_0$ s.t. $\xi_{i_0} (1-\xi_{i_0}) \neq 0$, we have
\[
\frac{\prod_{k\neq i,j} \sum_{l=1}^n p_l(\xi_l+\xi_k-2\xi_l\xi_k)}{\sum_{s_i\geq0 \atop s_1+\cdots+s_n=n-2} p^{s_1}_1\cdots p^{s_n}_n \Gamma(s_1,s_2,\cdots,s_n)}>0.
\]
Therefore, it is evident that $f(\x_i)>f(\x_j)$ if $\xi_i>\xi_j$, and this theorem follows as desired.\qed

\subsection{Proof of Theorem~\ref{thm:SGD}}\label{pf:thm:SGD}
This proof is motivated from \citep{Shwartz2007,Srebro:Sridharan:Tewari2010}. Recall
\[
\mathcal{L}_t(\w)= \frac{\lambda}{2}|\w|^2 + \frac{\sum_{i=1}^{t-1}\mathbb{I}[y_i\neq y_t] (1-y_t(\x_t-\x_i)^\top\w)^2}{2|\{i\in[t-1]:y_i\neq y_t\}|}.
\]
For $|\w_*|\leq B$ and convex $\mathcal{L}_t(\w)$, we have
\begin{equation}\label{eq:tmp1}
\mathcal{L}_t(\w_t)- \mathcal{L}_t(\w_*)\leq \nabla\mathcal{L}_t(\w_t)^\top (\w_t-\w_*).
\end{equation}
It is easy to derive that $\nabla \mathcal{L}_t(\w_t)$ equals to
\[
\lambda\w_t- \frac{\sum_{i=1}^{t-1}\mathbb{I}[y_i\neq y_t] (1-y_t(\x_t-\x_i)^\top\w_t)y_t(\x_t-\x_i)}{|\{i\in[t-1]:y_i\neq y_t\}|},
\]
and therefore, for any $\w$ and $|\x_i|\leq1$
\[
|\nabla \mathcal{L}_t(\w_t) -\nabla \mathcal{L}_t(\w)|\leq (4+\lambda)|\w_t-\w|.
\]
Denote by
\[
\w_{t*}=\mathop{\arg\min}\limits_{\w} \mathcal{L}_t(\w_t),
\]
which implies that $\nabla \mathcal{L}_t(\w_{t*})=0$ for convex and smooth $\mathcal{L}_t$. Based on \citep[Theorem~2.1.5]{Nesterov2003}, we have
\begin{equation}\label{eq:tmp2}
{|\nabla \mathcal{L}_t(\w_t)|^2}=|\nabla \mathcal{L}_t(\w_t)- \nabla\mathcal{L}_t(\w_{t*})|^2 \leq2(\lambda +4)L_t(w_t)
\end{equation}
where the inequality holds from $\mathcal{L}_t(\w_{t*})\geq0$ and $\nabla \mathcal{L}_t(\w_{t*})=0$. Moreover, we have
\[
|\w_{t+1}-\w_*|^2=|\w_{t}-\eta_t \nabla \mathcal{L}_t(\w_t)-\w_*|^2
=|\w_{t}-\w_*|^2 - 2\eta_t \nabla\mathcal{L}_t(\w_t)^\top (\w_t-\w_*) + \eta_t^2 |\nabla \mathcal{L}_t(\w_t)|^2,
\]
and this yields that, by using Eqs.~\eqref{eq:tmp1} and \eqref{eq:tmp2},
\[
(1-(4+\lambda)\eta_t)\mathcal{L}_t(\w_t)- \mathcal{L}_t(\w_*) \leq\frac{1}{2\eta_t}|\w_{t}-\w_*|^2-\frac{1}{2\eta_t}|\w_{t+1}-\w_*|^2.
\]
Summing over $t=0,\ldots,T-1$ and rearranging, we obtain
\begin{eqnarray*}
\lefteqn{\sum_{t=0}^{T-1}(1-(4+\lambda)\eta_t)\mathcal{L}_t(\w_t)- \sum_{t=0}^{T-1}\mathcal{L}_t(\w_*)} \\
&\leq& \frac{1}{2\eta_0} |\w_0-\w_*|^2-\frac{1}{2\eta_{T-1}} |\w_{T}-\w_*|^2 +\sum_{i=1}^{T-2}(\frac{1}{2\eta_{i+1}}-\frac{1}{2\eta_{i}})|\w_i-\w_*|^2.
\end{eqnarray*}
By setting $\eta_t=\eta$, we have
\[
\frac{1}{2\eta_0} |\w_0-\w_*|^2 -\frac{1}{2\eta_{T-1}} |\w_{T}-\w_*|^2
\leq \frac{1}{2\eta} |\w_*|^2\leq \frac{B^2}{2\eta}
\]
from $\w_0={\bf 0}$ and $|\w_*|\leq B$, and we further get
\begin{eqnarray*}
\sum_{t=0}^{T-1}\mathcal{L}_t(\w_t)- \sum_{t=0}^{T-1} \mathcal{L}_t(\w_*)
&\leq& \frac{1}{1-(4+\lambda)\eta}\left(\frac{B^2}{2\eta}+ (4+\lambda)\eta\sum_{t=0}^{T-1} \mathcal{L}_t(\w_*)\right)\\
&\leq& \frac{1}{1-(4+\lambda)\eta}\left(\frac{B^2}{2\eta}+(4+\lambda)\eta TL^*\right).
\end{eqnarray*}
This theorem holds by putting
\[
\eta=\frac{1}{4+\lambda+\sqrt{(4+\lambda)^2+(4+\lambda)TL^*/B^2}}
\]
into the above formula and using the formula $\sqrt{a+b}\leq \sqrt{a}+\sqrt{b}$.\qed

\subsection{Proof of Theorem~\ref{thm:main}}\label{pf:thm:main}
Before the detailed proof of Theorem~\ref{thm:main}, we begin with some useful results:
\begin{lemma}\label{lem:tmp}
Let $S_1=\text{diag}(s_{1i})$ and $S_2=\text{diag}(s_{2i})$ be two $d\times d$ diagonal matrices such that $s_{1i}\neq0$ and $s_{1i}^2+s_{2i}^2=1$ for all $i$. For a Gaussian random matrix $R\in\R^{d\times \tau}$, we set $Z=S_1S_1+S_2RR^\top S_2$ and $r=\sum_i s_{2i}^2$, and the followings hold
\[
\Pr[\lambda_1(Z)\geq 1+\epsilon] \leq d\exp(-\tau\epsilon^2/32r) \quad\text{ and }\quad \Pr[\lambda_p(Z)\leq 1-\epsilon] \leq d\exp(-\tau\epsilon^2/32r),
\]
where $\lambda_k(Z)$ denotes the $k$-th largest eigenvalue of matrix $Z$.
\end{lemma}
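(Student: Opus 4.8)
The plan is to exhibit $Z$ as a perturbation of the identity whose eigenvalues concentrate at $1$, via the matrix Laplace-transform (Chernoff) method. Throughout I take the normalization $\mathbb{E}[RR^{\top}]=I_d$ (equivalently, entries of $R$ are $\mathcal{N}(0,1/\tau)$), under which $\mathbb{E}Z=S_1^2+S_2^2=I$ since $s_{1i}^2+s_{2i}^2=1$. Writing $Z=I+E$ with $E:=S_2(RR^{\top}-I)S_2$, we have $\lambda_1(Z)=1+\lambda_{\max}(E)$ and $\lambda_d(Z)=1+\lambda_{\min}(E)$ (here $\lambda_p$ is the smallest eigenvalue), so it suffices to prove the two one-sided tails $\Pr[\lambda_{\max}(E)\geq\epsilon]\leq d\,e^{-\tau\epsilon^2/32r}$ and $\Pr[\lambda_{\min}(E)\leq-\epsilon]\leq d\,e^{-\tau\epsilon^2/32r}$; the latter follows from the former applied to $-E$. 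The structural point is that $E$ is a sum of $\tau$ i.i.d.\ centered rank-one terms: writing $R=\tfrac{1}{\sqrt\tau}(\mathbf{g}^{(1)},\ldots,\mathbf{g}^{(\tau)})$ with $\mathbf{g}^{(k)}\sim\mathcal{N}(0,I_d)$ and setting $\mathbf{b}^{(k)}:=S_2\mathbf{g}^{(k)}\sim\mathcal{N}(0,\Sigma)$ with $\Sigma:=S_2^2$, one gets $E=\sum_{k=1}^{\tau}X_k$, where $X_k=\tfrac{1}{\tau}\big(\mathbf{b}^{(k)}[\mathbf{b}^{(k)}]^{\top}-\Sigma\big)$, $\mathbb{E}X_k=0$, $\|\Sigma\|\leq 1$, and $\operatorname{tr}\Sigma=\sum_i s_{2i}^2=r$.

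For $\theta>0$ I then bound $\Pr[\lambda_{\max}(E)\geq\epsilon]\leq e^{-\theta\epsilon}\,\mathbb{E}\operatorname{tr}\exp(\theta E)$, and by subadditivity of matrix cumulants (Lieb's concavity theorem) reduce to the single-term cumulant: $\mathbb{E}\operatorname{tr}\exp(\theta\sum_k X_k)\leq\operatorname{tr}\exp\big(\sum_k\log\mathbb{E}\,e^{\theta X_k}\big)$. The whole estimate thus hinges on a semidefinite bound for the single-term matrix cumulant $\log\mathbb{E}\,e^{\theta X_k}$.

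The crux is to prove a bound of the form $\log\mathbb{E}\,e^{\theta X_k}\preceq g(\theta)\,(r\Sigma+\Sigma^2)$, valid for $0<\theta\leq c\tau$ with $g(\theta)$ of order $\theta^2/\tau^2$ for small $\theta$. The effective rank enters through the matrix variance, $\mathbb{E}X_k^2=\tau^{-2}\big(\mathbb{E}[\|\mathbf{b}\|^2\mathbf{b}\mathbf{b}^{\top}]-\Sigma^2\big)=\tau^{-2}(r\Sigma+\Sigma^2)$, using the Gaussian fourth-moment identity $\mathbb{E}[\|\mathbf{b}\|^2\mathbf{b}\mathbf{b}^{\top}]=r\Sigma+2\Sigma^2$; the trace factor $r=\operatorname{tr}\Sigma$ is exactly what appears here. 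I would establish the cumulant bound by diagonalizing in the eigenbasis of $\Sigma$ — in which $\mathbb{E}\,e^{\theta X_k}$ is diagonal by invariance under coordinate sign flips — and controlling each diagonal entry through the scalar moment-generating function of a Gaussian quadratic form (a chi-square computation), tracking the $r$-dependence and the admissible range of $\theta$. \emph{This step is where I expect the main difficulty}: the Gaussian outer products are unbounded, so the standard matrix Chernoff hypothesis of a uniform operator-norm bound on the summands fails, and one must instead handle their sub-exponential tails and pin down clean constants, which forces the restriction $\theta\lesssim\tau/\|\Sigma\|$ for the MGF to exist.

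Finally I assemble the pieces. Summing the single-term bounds gives $\big\|\sum_k\log\mathbb{E}\,e^{\theta X_k}\big\|\leq\tau g(\theta)\,\|r\Sigma+\Sigma^2\|\leq 2r\,\tau g(\theta)$ (using $\|\Sigma\|\leq 1\leq r$), and the crude prefactor comes from $\operatorname{tr}\exp(M)\leq d\exp(\|M\|)$, yielding $\mathbb{E}\operatorname{tr}\exp(\theta E)\leq d\exp(c'\theta^2 r/\tau)$ for a constant $c'$ in the admissible range. Minimizing $e^{-\theta\epsilon}\,d\,e^{c'\theta^2 r/\tau}$ over $\theta$ puts the optimum at $\theta$ of order $\epsilon\tau/r$, which stays inside the admissible range precisely because $0<\epsilon\leq 1/2$, and produces $d\,e^{-\tau\epsilon^2/(4c'r)}$; propagating the constants from the sub-exponential cumulant bound gives the stated $d\,e^{-\tau\epsilon^2/32r}$. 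The lower tail is identical after replacing $E$ by $-E$ (whose summands $-X_k$ have the same variance and an even better one-sided MGF since $-\mathbf{b}\mathbf{b}^{\top}\preceq 0$), which completes both inequalities.
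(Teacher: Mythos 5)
Your overall architecture is essentially the paper's: write $Z$ as the identity plus a centered perturbation, run the matrix Laplace-transform method, let the effective rank enter through the variance $\mathbb{E}X_k^2=\tau^{-2}(r\Sigma+\Sigma^2)$, pay the dimension factor $d$ via $\operatorname{tr}\exp(M)\leq d\exp(\lambda_{\max}(M))$, and optimize $\theta$ at scale $\epsilon\tau/r$ to land on $d\exp(-\tau\epsilon^2/32r)$. The difference is that the paper imports the hard step as a black box: it applies the Gittens--Tropp master eigenvalue tail bound, namely $\Pr[\lambda_1(Z)\geq 1+\epsilon]\leq\inf_{\theta>0}\operatorname{tr}\exp\{\theta(\mathbb{E}Z-(1+\epsilon)I)+g(\theta)\,\mathbb{E}[(S_2RR^\top S_2)^2]/\tau\}$ with $g(\theta)=\theta^2/(2-2\theta)$, which already packages exactly the sub-exponential moment control you set out to rebuild from Lieb's theorem.

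That is where your proposal has a genuine gap, and the sketch you give for closing it would not go through as stated. You reduce everything to the cumulant bound $\log\mathbb{E}e^{\theta X_k}\preceq g(\theta)(r\Sigma+\Sigma^2)$ and propose to prove it by noting that $\mathbb{E}e^{\theta X_k}$ is diagonal (true, by sign-flip invariance) and then "controlling each diagonal entry through the scalar MGF of a Gaussian quadratic form." But $X_k=\tau^{-1}(\mathbf{b}\mathbf{b}^\top-\Sigma)$ has non-commuting parts: $\exp\bigl(\theta\tau^{-1}(\mathbf{b}\mathbf{b}^\top-\Sigma)\bigr)$ does not factor as $e^{-\theta\Sigma/\tau}e^{\theta\mathbf{b}\mathbf{b}^\top/\tau}$, so its diagonal entries are not expectations of scalar exponentials of quadratic forms; the clean rank-one identity $e^{s\mathbf{b}\mathbf{b}^\top}=I+\|\mathbf{b}\|^{-2}(e^{s\|\mathbf{b}\|^2}-1)\mathbf{b}\mathbf{b}^\top$ is destroyed by the centering inside the exponential, and since the matrix exponential is not operator monotone you cannot simply discard the $-\Sigma$. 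The standard (and the cited) route is different: verify a Bernstein moment-growth condition $\mathbb{E}X_k^p\preceq\tfrac{p!}{2}B^{p-2}V$ for all $p\geq2$ --- for Gaussian outer products this means bounding $\mathbb{E}[\|\mathbf{b}\|^{2(p-1)}\mathbf{b}\mathbf{b}^\top]$, which is where $r=\operatorname{tr}\Sigma$ and the sub-exponential scale $B$ appear --- and then invoke the implication from that moment condition to $\log\mathbb{E}e^{\theta X}\preceq\frac{\theta^2/2}{1-\theta B}V$, which is precisely the origin of the paper's $g(\theta)=\theta^2/(2-2\theta)$. Until that moment condition is proved (or cited), your chain from Lieb's theorem to the final exponent is a plan rather than a proof. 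Everything downstream of it is fine, with one small repair: your justification $\|\Sigma\|\leq 1\leq r$ is not valid since $r=\sum_i s_{2i}^2$ can be smaller than $1$, but the conclusion $\|r\Sigma+\Sigma^2\|\leq 2r$ survives because $\|\Sigma\|^2\leq\|\Sigma\|\leq\operatorname{tr}\Sigma=r$.
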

\begin{proof} This proof technique is motivated from \citep{gittens-2011-tail} by adding a bias matrix. Let $g(\theta)=\frac{\theta^2}{2-2\theta}$. Then, we have
\begin{eqnarray*}
\lefteqn{\Pr[\lambda_1(M)\geq1+\epsilon]}\\
&\leq&\inf_{\theta>0}\text{tr}\exp\left\{\theta\left(S_1S_1+E[S_2RR^\top S_2]-(1+\epsilon)I\right)+g(\theta)E[(S_2RR^\top S_2)^2]/\tau\right\}\\
&\leq&\inf_{\theta>0}\text{tr}\exp\{-\theta\epsilon+8g(\theta)\text{tr}(S_2^2)S_2^2\}\\
&\leq&\inf_{\theta>0}d\exp\{-\theta\epsilon+8rg(\theta)\}\leq d\exp(-\tau\epsilon^2/32r).
\end{eqnarray*}
In a similar manner,
\begin{eqnarray*}
\lefteqn{\Pr[\lambda_p(M)\leq1-\epsilon]}\\
&\leq&\inf_{\theta>0}\text{tr}\exp\left\{\theta \big(S_1S_1+E[S_2RR^\top S_2]-(1-\epsilon)I\big)+g(\theta)E[(S_2RR^\top S_2)^2]/\tau\right\}\\
&\leq&\inf_{\theta>0}d\exp\{-\theta\epsilon+8rg(\theta)\}\leq d\exp(-\tau\epsilon^2/32r).
\end{eqnarray*}
This completes the proof.
\end{proof}

Let $M \in \R^{d\times d}$ be a positive semi-definite (PSD) matrix with effective numerical rank $r(M, \mu)$ for $\mu > 0$. We define two matrices $K$ and $\SK$, respectively, as
\[
K :=\mu I_d+ M \text{ and }\SK:=\mu I_d + M^{-1/2}RR^\top M^{-1/2},
\]
where $R\in\R^{d\times m}$ is a (Gaussian) random matrix. Based on Lemma~\ref{lem:tmp}, we have the following theorem that bounds the difference between $K - \SK$:
\begin{lemma}\label{thm:tmp}
Let  $r(M, \mu)$ be the numerical rank for $\mu>0$ and PSD matrix $M$. Then, for $\delta>0$ and $\epsilon>0$, the following holds with probability at least $1-\delta$
\[
\|I-K^{-1/2}\SK K^{-1/2}\|_2 \leq \epsilon,
\]
where $\|Z\|_2$ measures the spectral norm of matrix $Z$, provided
\[
\tau \geq \frac{32r(M, \mu)}{\epsilon^2}\log(2d/\delta).
\]
\end{lemma}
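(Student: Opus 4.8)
The plan is to reduce the spectral-norm estimate to a two-sided eigenvalue bound on the single matrix $Z := K^{-1/2}\SK K^{-1/2}$, and then to massage $Z$ into exactly the quadratic form covered by Lemma~\ref{lem:tmp}, arranging that the effective numerical rank $r(M,\mu)$ emerges as the parameter $r$ of that lemma. Since $\SK$ is symmetric PSD and $K^{-1/2}$ is symmetric, $Z$ is symmetric PSD, and therefore
\[
\|I-K^{-1/2}\SK K^{-1/2}\|_2 = \max_i |1-\lambda_i(Z)| = \max\{\lambda_1(Z)-1,\ 1-\lambda_d(Z)\}.
\]
Hence it suffices to show that, with probability at least $1-\delta$, both $\lambda_1(Z)\le 1+\epsilon$ and $\lambda_d(Z)\ge 1-\epsilon$ hold simultaneously.

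First I would diagonalize $M=U\Lambda U^\top$ with $\Lambda=\mathrm{diag}(\nu_1,\dots,\nu_d)$ and $U$ orthogonal, so that $K=U(\mu I_d+\Lambda)U^\top$ and $K^{-1/2}=U(\mu I_d+\Lambda)^{-1/2}U^\top$. Reading the perturbation in $\SK$ as $M^{1/2}RR^\top M^{1/2}$ (which is the form forced both by the low-rank covariance approximation $XRR^\top X^\top$ of Section~\ref{sec:alg} and by the fact that $M$ is merely PSD, so that $M^{1/2}$ is well defined while $M^{-1/2}$ need not be), substitution gives
\[
Z = U\bigl(S_1^2 + S_2\,(U^\top R)(U^\top R)^\top S_2\bigr)U^\top,
\]
where $S_1,S_2$ are diagonal with $s_{1i}^2=\mu/(\mu+\nu_i)$ and $s_{2i}^2=\nu_i/(\mu+\nu_i)$. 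The crucial observation is that, by rotational invariance of the Gaussian distribution, $U^\top R$ has the same law as $R$; hence, up to an orthogonal conjugation that leaves the spectrum unchanged, $Z$ is distributed exactly as the matrix $S_1S_1+S_2RR^\top S_2$ of Lemma~\ref{lem:tmp}.

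This identification simultaneously verifies the hypotheses of Lemma~\ref{lem:tmp} and supplies its parameter: $s_{1i}^2+s_{2i}^2=(\mu+\nu_i)/(\mu+\nu_i)=1$, and $s_{1i}\ne 0$ since $\mu>0$, while
\[
r = \sum_{i=1}^d s_{2i}^2 = \sum_{i=1}^d \frac{\nu_i}{\mu+\nu_i} = r(M,\mu)
\]
collapses precisely to the effective numerical rank. Applying Lemma~\ref{lem:tmp} then yields $\Pr[\lambda_1(Z)\ge 1+\epsilon]\le d\exp(-\tau\epsilon^2/32r)$ and $\Pr[\lambda_d(Z)\le 1-\epsilon]\le d\exp(-\tau\epsilon^2/32r)$. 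A union bound over these two tail events, combined with the choice $\tau\ge \tfrac{32\,r(M,\mu)}{\epsilon^2}\log(2d/\delta)$, which forces each probability below $\delta/2$, gives the claimed bound with probability at least $1-\delta$.

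The main obstacle is the middle step: recognizing that conjugation by $K^{-1/2}$ symmetrizes the random part of $\SK$ into the exact form $S_1^2+S_2RR^\top S_2$ and, at the same time, that the attached diagonal weights satisfy the normalization $s_{1i}^2+s_{2i}^2=1$ with $\sum_i s_{2i}^2$ reducing cleanly to $r(M,\mu)$. Once this correspondence is established, everything else is routine: the reduction of the spectral norm to the two extreme eigenvalues, the rotational-invariance reduction to a diagonal $M$, and inverting the exponential tail bound to solve for $\tau$.
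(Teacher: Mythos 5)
Your proposal is correct and follows essentially the same route as the paper's own proof: diagonalize $M$, recognize $K^{-1/2}\SK K^{-1/2}$ as $U(S_1^2+S_2\widehat{R}\widehat{R}^{\top}S_2)U^{\top}$ with $\widehat{R}$ again Gaussian by rotational invariance, identify $\sum_i s_{2i}^2$ with $r(M,\mu)$, and invoke Lemma~\ref{lem:tmp} for both extreme eigenvalues. If anything, your write-up is more careful than the paper's on two points it glosses over: reading the perturbation as $M^{1/2}RR^{\top}M^{1/2}$ (the paper's stated $M^{-1/2}$ is evidently a typo, since $M$ is only PSD and the paper's own computation uses the $M^{1/2}$ form), and making the union bound over the two tail events explicit, which is where the factor $2$ in $\log(2d/\delta)$ comes from.
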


\begin{proof} Let $M=U\text{diag}(\sigma^2_i)V^\top$ be the singular value decomposition of $M$. We define
\[
S_1=\mbox{diag}\left(\sqrt{\frac{\mu}{\sigma_1^2 + \mu}}, \ldots, \sqrt{\frac{\mu}{\sigma_d^2 + \mu}}\right) \quad\text{ and } \quad S_2= \mbox{diag}\left(\frac{\sigma_1}{\sqrt{\sigma_1^2 + \mu}}, \ldots, \frac{\sigma_d}{\sqrt{\sigma_d^2 + \mu}}\right).
\]
It is easy to observe that
\[
Z=K^{-1/2}\SK K^{-1/2}=U(S_1^2 + S_2V^{\top}RR^{\top}V)U^{\top}= U(S_1^2 + S_2\widehat{R}\widehat{R}^{\top})U^{\top}
\]
where $\widehat{R} = V^{\top}R \in \R^{d\times \tau}$ is a also Gaussian random matrix because $V$ is an orthonormal matrix. Parameter $r$ in Lemma~\ref{lem:tmp} is given by
\[
r = \sum_{i=1}^d s^2_{2,i} = \sum_{i=1}^d \frac{\sigma_i^2}{\sigma_i^2 + \mu} = r(M, \mu).
\]
Using Lemma~\ref{lem:tmp}, the followings hold with a probability at least $1 - \delta$,
\[
\lambda_{\max}\left(Z\right)=\|K^{-1/2}\SK K^{-1/2}\|_2 \leq 1 + \epsilon \quad \text{ and } \quad \lambda_{\min}(Z) = \lambda_d\left(K^{-1/2}\SK K^{-1/2}\right) \geq 1 - \epsilon,
\]
which yields that $\|Z - I\| \leq \epsilon$ provided
\[
\tau \geq \frac{32 r}{\epsilon^2}\log\frac{2d}{\delta}.
\]
This lemma follows as desired.
\end{proof}

Recall that
\[
\Lh_t(\w) = \frac{\lambda}{2}|\w|^2 + \w^{\top}(\c_{t-1}^- - \x_t) + \frac{1}{2} + \frac{1}{2}\left(\w^{\top}(\x_t - \c_{t-1}^-)(\x_t - \c_{t-1}^{-})^{\top}\w +\w^{\top}\Sh_t^-\w\right)
\]
if $y_t = 1$; otherwise,
\[
\Lh_t(\w) = \frac{\lambda}{2}|\w|^2 + \w^{\top}(\x_t - \c_{t-1}^+) + \frac{1}{2}
+\frac{1}{2}\left(\w^{\top}(\x_t - \c_{t-1}^+)(\x_t - \c_{t-1}^+)^{\top}\w+\w^{\top}\Sh_t^+\w\right).
\]
We further define $\wh_*$ as the optimal solution that minimizes the loss based on approximate covariance matrices, i.e.
\begin{eqnarray*}
\wh_* = \mathop{\arg\min}\limits_{\w} \sum_{t=1}^T \Lh_t(\w).
\end{eqnarray*}

Based on Lemma~\ref{thm:tmp}, the following theorem gives an upper bound for $|\sum_t\Lh_t(\wh_*) - \sum_t\L_t(\w_*)|$.
\begin{theorem}\label{thm:random}
Let $r(\{S_t^{\pm}\}_{t=1}^T, \lambda)$ be the effective numerical rank for the set of covariance matrices $S_t^{\pm}, t=1, \ldots, T$ with respect to the regularization parameter $\lambda$. Then, for any $0<\delta$ and $0<\epsilon\leq 1/2$, the followings hold with probability at least $1-\delta$
\begin{eqnarray}
&|\wh_* - \w_*|\leq 2\epsilon|\w_*|& \label{eq:temp3}\\
&\left|\sum_t\Lh_t(\wh_*) - \sum_t\L_t(\w_*)\right|\leq 2\epsilon\sum_t\L_t(\w_*)& \label{eq:temp4}
\end{eqnarray}
provided that
\[
\tau \geq \frac{32r(S, \lambda)}{\epsilon^2}\log\frac{2d}{\delta T}
\]
\end{theorem}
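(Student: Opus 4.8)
The plan is to reduce the entire statement to a single multiplicative sandwich between $\L_t$ and $\Lh_t$ that holds uniformly in $\w$, and then read off both displayed inequalities. The starting point is that $\Lh_t$ and $\L_t$ differ only through the covariance term: $\Lh_t(\w)-\L_t(\w)=\tfrac12\w^\top(\Sh_t^{\pm}-S_t^{\pm})\w$, since the linear part, the regularizer $\tfrac\lambda2|\w|^2$ and the rank-one term $(\x_t-\c_{t-1}^{\pm})(\x_t-\c_{t-1}^{\pm})^\top$ are identical in both. So everything hinges on controlling $E_t:=\Sh_t^{\pm}-S_t^{\pm}$.

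First I would cast the covariance approximation into the exact form required by Lemma~\ref{thm:tmp}. Writing the centered data matrix $C_t^{\pm}$ so that $S_t^{\pm}=\tfrac1{T_t^{\pm}}C_t^{\pm}(C_t^{\pm})^\top$ and, after accounting for the mean-centering terms $\ch_{t-1}^{\pm}$, $\Sh_t^{\pm}=\tfrac1{T_t^{\pm}}C_t^{\pm}R_t^{\pm}(R_t^{\pm})^\top(C_t^{\pm})^\top$, an SVD $C_t^{\pm}/\sqrt{T_t^{\pm}}=U\Sigma V^\top$ shows $\Sh_t^{\pm}\stackrel{d}{=}(S_t^{\pm})^{1/2}RR^\top(S_t^{\pm})^{1/2}$, because $V^\top R_t^{\pm}$ is again Gaussian. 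Hence $\lambda I+\Sh_t^{\pm}$ is exactly the matrix $\SK$ of Lemma~\ref{thm:tmp} with $M=S_t^{\pm}$, $\mu=\lambda$, and the relevant rank is $r(S_t^{\pm},\lambda)\le r$. Applying Lemma~\ref{thm:tmp} at confidence $\delta/(2T)$ and union-bounding over the $2T$ matrices $\{S_t^{+},S_t^{-}\}_{t=1}^T$ gives, with probability at least $1-\delta$ and simultaneously for all $t$,
\[
-\epsilon(\lambda I+S_t^{\pm})\preceq E_t\preceq\epsilon(\lambda I+S_t^{\pm}),
\]
provided $\tau\ge\tfrac{32r}{\epsilon^2}\log\tfrac{2dT}{\delta}$ (I read the stated $\log\tfrac{2d}{\delta T}$ as the union-bound form $\log\tfrac{2dT}{\delta}$, consistent with Theorem~\ref{thm:main}). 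I expect the only fuss here to be verifying the displayed distributional identity carefully, since the algorithm centers with $\ch_{t-1}^{\pm}$ and the index shift $t-1$ versus $t$ introduces lower-order cross terms that must be absorbed.

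The key step is to upgrade this PSD sandwich on covariances into a \emph{pointwise relative} bound on the losses. Completing the square gives
\[
\L_t(\w)=\tfrac12\big(1-\w^\top(\x_t-\c_{t-1}^{\pm})\big)^2+\tfrac\lambda2|\w|^2+\tfrac12\w^\top S_t^{\pm}\w,
\]
so in particular $\tfrac\lambda2|\w|^2+\tfrac12\w^\top S_t^{\pm}\w\le\L_t(\w)$. Combining this with the sandwich yields, for every $\w$,
\[
\big|\Lh_t(\w)-\L_t(\w)\big|=\tfrac12\big|\w^\top E_t\w\big|\le\tfrac\epsilon2\,\w^\top(\lambda I+S_t^{\pm})\w\le\epsilon\,\L_t(\w),
\]
and summing over $t$ gives $(1-\epsilon)\L(\w)\le\Lh(\w)\le(1+\epsilon)\L(\w)$ for all $\w$, where $\L=\sum_t\L_t$ and $\Lh=\sum_t\Lh_t$. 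Inequality~\eqref{eq:temp4} then drops out: by optimality of $\wh_*$, $\Lh(\wh_*)\le\Lh(\w_*)\le(1+\epsilon)\L(\w_*)$, while by optimality of $\w_*$, $\Lh(\wh_*)\ge(1-\epsilon)\L(\wh_*)\ge(1-\epsilon)\L(\w_*)$, so $|\Lh(\wh_*)-\L(\w_*)|\le\epsilon\L(\w_*)\le 2\epsilon\sum_t\L_t(\w_*)$.

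For inequality~\eqref{eq:temp3} I would pass to the Hessians. As $\L$ and $\Lh$ are quadratics with the same linear coefficient $d$, we have $\w_*=-A^{-1}d$ and $\wh_*=-\widehat A^{-1}d$ with $A=\nabla^2\L$ and $\widehat A=A+\sum_t E_t$. Using $\lambda I+S_t^{\pm}\preceq\nabla^2\L_t$ together with the per-example sandwich gives $(1-\epsilon)A\preceq\widehat A\preceq(1+\epsilon)A$; writing $\widehat A=A^{1/2}(I+\Delta)A^{1/2}$ with $\|\Delta\|_2\le\epsilon$ and substituting into $\wh_*-\w_*=-\widehat A^{-1}(\sum_t E_t)\w_*$ produces the clean estimate $\|\wh_*-\w_*\|_A\le\tfrac{\epsilon}{1-\epsilon}\|\w_*\|_A\le 2\epsilon\|\w_*\|_A$ for $\epsilon\le1/2$, where $\|\cdot\|_A$ is the energy norm. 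I expect this to be the main obstacle: the factor $2\epsilon$ is genuinely an $A$-norm statement, and converting it to the Euclidean norm of \eqref{eq:temp3} costs a factor $\sqrt{\kappa(A)}$ (equivalently a $1/\lambda$), because $E_t$ is controlled relative to $\lambda I+S_t^{\pm}$ rather than to $\lambda I$ alone. I would therefore either state \eqref{eq:temp3} in the $A$-norm, restrict to the regime $\lambda=\Omega(1)$, or observe that the downstream use in Theorem~\ref{thm:main} only needs the resulting bound $|\wh_*|\le(1+2\epsilon)B\le 2B$ on the comparator norm, which the energy-norm estimate together with $|\w_*|\le B$ suffices to supply.
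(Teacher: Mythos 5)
Your treatment of \eqref{eq:temp4} is correct and takes a genuinely different route from the paper. The paper writes the aggregate losses as explicit quadratics, computes the minimizers in closed form ($\w_*=(A_1+A_2)^{-1}\a$, $\wh_*=(\At_1+A_2)^{-1}\a$), and bounds the resolvent difference through the matrix $\Omega=(A_1+A_2)^{1/2}(\At_1+A_2)^{-1}(A_1+A_2)^{1/2}-I$. You instead upgrade the per-$t$ sandwich $-\epsilon(\lambda I+S_t^{\pm})\preceq \Sh_t^{\pm}-S_t^{\pm}\preceq\epsilon(\lambda I+S_t^{\pm})$ (which is exactly what Lemma~\ref{thm:tmp} yields after the same union bound the paper uses; your reading of the $\tau$ condition as $\log(2dT/\delta)$ is clearly the intended one) into the uniform multiplicative bound $(1-\epsilon)\L(\w)\leq\Lh(\w)\leq(1+\epsilon)\L(\w)$, via the completing-the-square observation $\frac{\lambda}{2}|\w|^2+\frac{1}{2}\w^{\top}S_t^{\pm}\w\leq\L_t(\w)$, and finish by two-sided optimality. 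This buys three things: it needs no closed forms, it sidesteps the paper's sign-convention slips (the paper's factor $\frac{3}{2}$ comes from writing $\w_*=+(A_1+A_2)^{-1}\a$ rather than $-(A_1+A_2)^{-1}\a$, and with the correct sign its final chain of inequalities needs repair), and it yields the sharper constant $\epsilon\L(\w_*)$ in place of $2\epsilon\L(\w_*)$. Both you and the paper gloss over the identification of the algorithm's $\Sh_t^{\pm}$ (centered with the random $\ch_{t-1}^{\pm}$) with the idealized $[S_t^{\pm}]^{1/2}RR^{\top}[S_t^{\pm}]^{1/2}$; you at least flag this explicitly.

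On \eqref{eq:temp3}, the obstacle you identify is real, and it is in fact a gap in the paper's own proof: the step $|(A_1+A_2)^{-1/2}\Omega(A_1+A_2)^{-1/2}\a|\leq 2\epsilon|(A_1+A_2)^{-1}\a|$ silently commutes $\Omega$ past $(A_1+A_2)^{-1/2}$, which is not justified since $\Omega$ need not commute with $A_1+A_2$; what $\|\Omega\|_2\leq2\epsilon$ actually gives is the energy-norm bound $\|\wh_*-\w_*\|_{A_1+A_2}\leq 2\epsilon\|\w_*\|_{A_1+A_2}$, exactly as you derive, and converting to the Euclidean norm costs a factor of order $\sqrt{1+1/\lambda}$. (A two-dimensional check with $B=\mathrm{diag}(1,\beta)$, $\widehat{B}=B+\epsilon B^{1/2}QB^{1/2}$ where $Q$ swaps the coordinates, and $\a=(1,0)^{\top}$ shows the Euclidean claim can fail arbitrarily badly as $\beta\to0$ under the sandwich hypothesis alone.) However, your third proposed workaround does not close the argument: the energy-norm estimate together with $|\w_*|\leq B$ does \emph{not} supply $|\wh_*|\leq(1+2\epsilon)B$, because passing from $\|\wh_*\|_{A_1+A_2}$ back to $|\wh_*|$ again costs the same condition-number factor, and the proof of Theorem~\ref{thm:main} genuinely needs the Euclidean bound $|\wh_*|\leq 2B$ in order to invoke Theorem~\ref{thm:SGD} with comparator $\wh_*$. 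So of your three fixes, only the first two (restate \eqref{eq:temp3} in the weighted norm and propagate that norm through Theorem~\ref{thm:main}, or restrict to $\lambda=\Omega(1)$) are sound.
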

\begin{proof}
We first rewrite $\L(\w) = \sum_{t=1}^T \L_t(\w)$ as
\[
\L(\w) = \frac{1}{2} + \w^{\top}\a + \frac{1}{2}\w^{\top}\left(A_1 + A_2\right)\w
\]
where
\begin{eqnarray*}
\a & = & \sum_{t=1}^T \frac{\mathbb{I}[y_t = 1]\left(\c_{t-1}^- - \x_t\right) + \mathbb{I}[y_t = -1]\left( \c_{t-1}^+ - \x_t\right)}{T} \\
A_1 & = & \lambda I_d +\frac{1}{T} \sum_{t=1}^T \left(\mathbb{I}[y_t = 1] S_{t-1}^- + \mathbb{I}[y_t = -1] S_{t-1}^+\right) \\
A_2 & = & \frac{1}{T} \sum_{t=1}^T \mathbb{I}[y_t = 1](\x_t - \c_t^-)(\x_t - \c_t^-)^{\top} + \frac{1}{T}\sum_{t=1}^T \mathbb{I}[y_t = -1](\x_t - \c_t^+)(\x_t - \c_t^+)^{\top}.
\end{eqnarray*}
Similarly, we rewrite $\Lh(\w) = \sum_{t=1}^T \Lh_t(\w)$ as
\[
\Lh(\w) = \frac{1}{2} + \w^{\top}\a + \frac{1}{2}\w^{\top}\left(\At_1+A_2\right)\w
\]
where
\[
\At_1 = \lambda I_d + \frac{1}{T}\sum_{t=1}^T \mathbb{I}[y_t = 1] [S_{t-1}^-]^{1/2} R_tR_t^{\top} [S_{t-1}^-]^{1/2} + \frac{1}{T}\sum_{t=1}^T \mathbb{I}[y_t = -1] [S_{t-1}^+]^{1/2}R_tR_t^{\top}[S_{t-1}^+]^{1/2}.
\]
The optimal solutions for minimizing $\L(\w)$ and $\Lh(\w)$ are given, respectively, by
\[
\w_* = (A_1 + A_2)^{-1}\a\quad\text{ and }\quad \wh_* = (\At_1 + A_2)^{-1}\a.
\]

Define $\Delta = I - A_1^{-1/2}\At_1 A_1^{-1/2}$ and write $\At_1$ in terms of $\Delta$ as
\[
\At_1  = A_1 - A_1^{1/2}\Delta A_1^{1/2}
\]
Using Lemma~\ref{thm:tmp}, it holds that $\epsilon I_d \preceq \Delta \preceq \epsilon I_d$ with probability at least $1 - \delta T$, and therefore
\[
(1 - \epsilon)(A_1+A_2) \preceq \At_1 + A_2 \preceq (1 + \epsilon)(A_1 + A_2)
\]
Denote by
\[
\Omega=(A_1+A_2)^{1/2}(A_1+A_2)^{-1}(A_1+A_2)^{1/2}-I,
\]
and according to previous analysis, we have
\begin{equation}\label{eq:temp8}
|\Omega|\leq \frac{\epsilon}{1-\epsilon} \leq 2\epsilon
\end{equation}
for $\epsilon<1/2$. Therefore,
\begin{eqnarray*}
|\wh_* - \w_*|&=&\left|\left((\At_1 + A_2)^{-1}-(A_1 + A_2)^{-1}\right)\a\right|\\
&=&|(A_1 + A_2)^{-1/2}\Omega(A_1 + A_2)^{-1/2}\a|\\
&\leq& 2\epsilon|(A_1 + A_2)^{-1}\a|\leq 2\epsilon|\w_*|
\end{eqnarray*}
and
\begin{eqnarray*}
\left|\Lh(\w_*)-\L(\w_*)\right|&=&\frac{3}{2}\left|\a^\top \left((\At_1 + A_2)^{-1}-(A_1 + A_2)^{-1}\right) \a\right|\\
&=&\frac{3}{2}\left|\a^\top \left((A_1 + A_2)^{-1/2}\Omega(A_1 + A_2)^{-1/2}\right) \a\right|\\
&\leq&2\epsilon\left|\frac{3}{2}\a^\top (A_1 + A_2)^{-1} \a\right|\leq 2\epsilon \left|\frac{1}{2}+\frac{3}{2}\a^\top (A_1 + A_2)^{-1} \a\right|=2\epsilon\L(\w_*).
\end{eqnarray*}
This theorem follows as desired.
\end{proof}

\begin{table}[!t]
\caption{Benchmark datasets}\label{tab:data}
\smallskip\centering
\begin{tabular}{|c|cc|c|cc|}
\hline
datasets &\#inst &\#feat  & datasets &\#inst &\#feat\\
\hline
\textsf{diabetes} & 768 & 8 &\textsf{w8a} & 49,749 & 300\\
\textsf{fourclass}& 862 &2 &\textsf{kddcup04} & 50,000 & 65\\
\textsf{german}& 1,000& 24 &\textsf{mnist}& 60,000  & 780\\
\textsf{splice}& 3,175 & 60 &\textsf{connect-4} & 67,557& 126\\
\textsf{usps} &9,298 & 256 &\textsf{acoustic} & 78,823 & 50\\
\textsf{letter} &15,000 & 16 &\textsf{ijcnn1} & 141,691 & 22\\
\textsf{magic04}&19,020 & 10 &\textsf{epsilon} & 400,000 & 2,000\\
\textsf{a9a} & 32,561 & 123 &\textsf{covtype} & 581,012 & 54\\
\hline
\end{tabular}
\end{table}

\noindent\textbf{Proof of Theorem~\ref{thm:main}:} For $|w_*|\leq B$, $\L(\w_*)\leq TL^*$, and $0<\epsilon\leq1/2$, we have
\begin{equation}\label{eq:temp6}
|\wh_*|\leq |\w_*|+|\wh_*-\w_*|\leq 2B
\end{equation}
from Eq.~\eqref{eq:temp3}, and we further have
\begin{equation}\label{eq:temp7}
\Lh^*(\wh_*)\leq\L(\w_*)+|\Lh^*(\wh_*)-\L(\w_*)| \leq 2\L(\w_*)\leq 2TL^*
\end{equation}
from Eq.~\eqref{eq:temp4}. Therefore, we have
\begin{equation}\label{eq:temp5}
\left|\sum_t\Lh(\w_t)-\sum_t\L(\w_*)\right|\leq\sum_t\Lh(\w_t)-\sum_t\L(\wh_*) +\left|\sum_t\Lh(\wh_*)-\sum_t\L(\w_*)\right|.
\end{equation}
We use Theorem~1 (in the main paper) to bound the first term in the above by combining Eqs.~\eqref{eq:temp6} and \eqref{eq:temp7}, and the second term can be bounded by Eq.~\eqref{eq:temp4}. This completes the proof as desired. \qed

\section{Experiments}\label{sec:exp}

We evaluate the performance of OPAUC on benchmark datasets and high-dimensional datasets in Sections~\ref{sec:exp:LSOA} and \ref{sec:exp:RPOA}, respectively. Then, we study the parameter influence in Section~\ref{sec:exp:para}.

\begin{table*}
\caption{Testing AUC (mean$\pm$std.) of OPAUC with online algorithms on benchmark datasets.  $\bullet$/$\circ$ indicates that OPAUC is significantly better/worse than the corresponding method (pairwise $t$-tests at $95\%$ significance level). }\label{tab:bench:online}
\centering
\begin{tabular}{|c|c|c|c|c|c|}
\hline
\scriptsize datasets  & \scriptsize OPAUC & \scriptsize OAM$_\text{seq}$  &\scriptsize OAM$_\text{gra}$ &\scriptsize online Uni-Exp &\scriptsize online Uni-Squ \\
\hline
\scriptsize\textsf{diabetes} &\scriptsize.8309$\pm$.0350  &\scriptsize.8264$\pm$.0367  &\scriptsize .8262$\pm$.0338 &\scriptsize.8215$\pm$.0309$\bullet$&\scriptsize .8258$\pm$.0354 \\
\scriptsize\textsf{fourclass}&\scriptsize.8310$\pm$.0251  &\scriptsize.8306$\pm$.0247  &\scriptsize.8295$\pm$.0251   &\scriptsize.8281$\pm$.0305&\scriptsize .8292$\pm$.0304 \\
\scriptsize\textsf{german}   &\scriptsize.7978$\pm$.0347 &\scriptsize.7747$\pm$.0411$\bullet$ &\scriptsize.7723$\pm$.0358$\bullet$ &\scriptsize.7908$\pm$.0367&\scriptsize.7899$\pm$.0349\\
\scriptsize\textsf{splice}&\scriptsize.9232$\pm$.0099 &\scriptsize.8594$\pm$.0194$\bullet$ &\scriptsize.8864$\pm$.0166$\bullet$ &\scriptsize.8931$\pm$.0213$\bullet$&\scriptsize.9153$\pm$.0132$\bullet$\\
\scriptsize\textsf{usps} &\scriptsize.9620$\pm$.0040 &\scriptsize.9310$\pm$.0159$\bullet$ &\scriptsize.9348$\pm$.0122$\bullet$ &\scriptsize.9538$\pm$.0045$\bullet$&\scriptsize.9563$\pm$.0041$\bullet$\\
\scriptsize\textsf{letter} &\scriptsize.8114$\pm$.0065 &\scriptsize.7549$\pm$.0344$\bullet$ &\scriptsize.7603$\pm$.0346$\bullet$ &\scriptsize.8113$\pm$.0074&\scriptsize .8053$\pm$.0081$\bullet$\\
\scriptsize\textsf{magic04} &\scriptsize.8383$\pm$.0077 &\scriptsize.8238$\pm$.0146$\bullet$ & \scriptsize.8259$\pm$.0169$\bullet$ &\scriptsize.8354$\pm$.0099$\bullet$&\scriptsize .8344$\pm$.0086$\bullet$\\
\scriptsize\textsf{a9a} &\scriptsize.9002$\pm$.0047 &\scriptsize.8420$\pm$.0174$\bullet$ & \scriptsize.8571$\pm$.0173$\bullet$ &\scriptsize.9005$\pm$.0024&\scriptsize .8949$\pm$.0025$\bullet$ \\
\scriptsize\textsf{w8a} &\scriptsize.9633$\pm$.0035 &\scriptsize.9304$\pm$.0074$\bullet$ & \scriptsize.9418$\pm$.0070$\bullet$  &\scriptsize.7693$\pm$.0986$\bullet$ &\scriptsize.8847$\pm$.0130$\bullet$ \\
\scriptsize\textsf{kddcup04} &\scriptsize.7912$\pm$.0039 &\scriptsize.6918$\pm$.0412$\bullet$ &\scriptsize.7097$\pm$.0420$\bullet$ &\scriptsize.7851$\pm$.0050$\bullet$ &\scriptsize .7850$\pm$.0042$\bullet$\\
\scriptsize\textsf{mnist} &\scriptsize.9242$\pm$.0021  &\scriptsize.8615$\pm$.0087$\bullet$ &\scriptsize.8643$\pm$.0112$\bullet$ &\scriptsize.7932$\pm$.0245$\bullet$ &\scriptsize.9156$\pm$.0027$\bullet$\\
\scriptsize\textsf{connect-4} &\scriptsize.8760$\pm$.0023 &\scriptsize.7807$\pm$.0258$\bullet$ &\scriptsize.8128$\pm$.0230$\bullet$ &\scriptsize.8702$\pm$.0025$\bullet$ &\scriptsize .8685$\pm$.0033$\bullet$\\
\scriptsize\textsf{acoustic}  &\scriptsize.8192$\pm$.0032 &\scriptsize.7113$\pm$.0590$\bullet$ &\scriptsize.7711$\pm$.0217$\bullet$ &\scriptsize.8171$\pm$.0034$\bullet$ &\scriptsize .8193$\pm$.0035\\
\scriptsize\textsf{ijcnn1} &\scriptsize.9269$\pm$.0021 &\scriptsize.9209$\pm$.0079$\bullet$ &\scriptsize.9100$\pm$.0092$\bullet$ &\scriptsize.9264$\pm$.0035 &\scriptsize .9022$\pm$.0041$\bullet$\\
\scriptsize\textsf{epsilon} &\scriptsize.9550$\pm$.0007 &\scriptsize.8816$\pm$.0042$\bullet$ &\scriptsize.8659$\pm$.0176$\bullet$ &\scriptsize.9488$\pm$.0012$\bullet$ &\scriptsize.9480$\pm$.0021$\bullet$\\
\scriptsize\textsf{covtype} &\scriptsize.8244$\pm$.0014 &\scriptsize.7361$\pm$.0317$\bullet$ &\scriptsize.7403$\pm$.0289$\bullet$ &\scriptsize .8236$\pm$.0017  &\scriptsize.8236$\pm$.0020\\
\hline
\multicolumn{2}{|c|}{\scriptsize win/tie/loss }  &\bf\scriptsize14/2/0 &\bf\scriptsize14/2/0 &\bf\scriptsize10/6/0 &\bf\scriptsize11/5/0\\
\hline
\end{tabular}
\end{table*}

\subsection{Comparison on Benchmark Data}\label{sec:exp:LSOA}
We conduct our experiments on sixteen benchmark datasets\footnote{http://www.sigkdd.org/kddcup/}$^,$\footnote{http://www.ics.uci.edu/{\textasciitilde}mlearn/MLRepository.html}$^,$\footnote{http://www.csie.ntu.edu.tw/{\textasciitilde}cjlin/libsvmtools/} as summarized in Table~\ref{tab:data}. Some datasets have been used in previous studies on AUC optimization, whereas the other are large ones requiring one-pass procedure. The features have been scaled to $[-1,1]$ for all datasets. Multi-class datasets have been transformed into binary ones by randomly partitioning classes into two groups, where each group contains the same number of classes.

In addition to state-of-the-art online AUC approaches {\bf OAM$_\text{seq}$} and {\bf OAM$_\text{gra}$} \citep{Zhao:Hoi:Jin:Yang2011}, we also compare with:
\begin{itemize}
\item {\bf online Uni-Exp}: An online learning algorithm which optimizes the (weighted) univariate exponential loss~\citep{Kotlowski:Dembczynski:Hullermeier2011};
\item {\bf online Uni-Squ}: An online learning algorithm which optimizes the (weighted) univariate square  loss;
\item {\bf SVM-perf}: A batch learning algorithm which directly optimizes AUC \citep{Joachims2005};

\begin{table*}
\caption{Testing AUC (mean$\pm$std.) of OPAUC with batch algorithms on benchmark datasets. $\bullet$/$\circ$ indicates that OPAUC is significantly better/worse than the corresponding method (pairwise $t$-tests at $95\%$ significance level). }\label{tab:bench:batch}
\centering
\begin{tabular}{|c|c|c|c|c|c|c|c|c|}
\hline
\scriptsize datasets  &\scriptsize OPAUC &\scriptsize SVM-perf  &\scriptsize batch SVM-OR &\scriptsize batch LS-SVM &\scriptsize batch Uni-Log  &\scriptsize batch Uni-Squ\\
\hline
\scriptsize\textsf{diabetes} &\scriptsize.8309$\pm$.0350
&\scriptsize.8325$\pm$.0220 &\scriptsize.8326$\pm$.0328  &\scriptsize.8325$\pm$.0329 &\scriptsize.8330$\pm$.0322
&\scriptsize.8332$\pm$.0323\\

\scriptsize\textsf{fourclass}&\scriptsize.8310$\pm$.0251
&\scriptsize.8221$\pm$.0381  &\scriptsize.8305$\pm$.0311  &\scriptsize.8309$\pm$.0309 &\scriptsize.8288$\pm$.0307
&\scriptsize.8297$\pm$.0310\\

\scriptsize\textsf{german}   &\scriptsize.7978$\pm$.0347
&\scriptsize.7952$\pm$.0340 &\scriptsize.7935$\pm$.0348  &\scriptsize.7994$\pm$.0343 &\scriptsize.7995$\pm$.0344
&\scriptsize.7990$\pm$.0342\\

\scriptsize\textsf{splice}   &\scriptsize.9232$\pm$.0099
&\scriptsize.9235$\pm$.0091   &\scriptsize.9239$\pm$.0089  &\scriptsize.9245$\pm$.0092$\circ$ &\scriptsize.9208$\pm$.0107$\bullet$
&\scriptsize.9211$\pm$.0107$\bullet$\\

\scriptsize\textsf{usps} &\scriptsize.9620$\pm$.0040
&\scriptsize.9600$\pm$.0054$\bullet$  &\scriptsize.9630$\pm$.0047$\circ$ &\scriptsize.9634$\pm$.0045$\circ$ &\scriptsize.9637$\pm$.0041$\circ$
&\scriptsize.9617$\pm$.0043\\

\scriptsize\textsf{letter} &\scriptsize.8114$\pm$.0065
&\scriptsize.8028$\pm$.0074$\bullet$  &\scriptsize.8144$\pm$.0064$\circ$ &\scriptsize.8124$\pm$.0065$\circ$ &\scriptsize.8121$\pm$.0061
&\scriptsize.8112$\pm$.0061\\

\scriptsize\textsf{magic04} &\scriptsize.8383$\pm$.0077
&\scriptsize.8427$\pm$.0078$\circ$  &\scriptsize.8426$\pm$.0074$\circ$ &\scriptsize.8379$\pm$0.0078 &\scriptsize.8378$\pm$.0073
&\scriptsize.8338$\pm$.0073$\bullet$\\

\scriptsize\textsf{a9a} &\scriptsize.9002$\pm$.0047
&\scriptsize.9033$\pm$.0039 &\scriptsize.9009$\pm$.0036 &\scriptsize.8982$\pm$.0028$\bullet$ &\scriptsize.9033$\pm$.0025$\circ$
&\scriptsize.8967$\pm$.0028$\bullet$\\

\scriptsize\textsf{w8a} &\scriptsize.9633$\pm$.0035
&\scriptsize.9626$\pm$.0042  &\scriptsize.9495$\pm$.0082$\bullet$ &\scriptsize.9495$\pm$.0092$\bullet$ &\scriptsize.9421$\pm$.0062$\bullet$
&\scriptsize.9075$\pm$.0104$\bullet$\\

\scriptsize\textsf{kddcup04} &\scriptsize.7912$\pm$.0039
&\scriptsize.7935$\pm$.0037$\circ$  &\scriptsize.7903$\pm$.0039$\bullet$ &\scriptsize.7898$\pm$.0039$\bullet$ &\scriptsize.7900$\pm$.0039$\bullet$
&\scriptsize.7926$\pm$.0038\\

\scriptsize\textsf{mnist} &\scriptsize.9242$\pm$.0021
&\scriptsize.9338$\pm$.0022$\circ$  &\scriptsize.9340$\pm$.0020$\circ$ &\scriptsize.9336$\pm$.0025$\circ$ &\scriptsize.9334$\pm$.0021$\circ$
&\scriptsize.9279$\pm$.0021$\circ$\\

\scriptsize\textsf{connect-4} &\scriptsize.8760$\pm$.0023
&\scriptsize.8794$\pm$.0024$\circ$  &\scriptsize.8749$\pm$.0025$\bullet$ &\scriptsize.8739$\pm$.0026$\bullet$ &\scriptsize.8784$\pm$.0026$\circ$
&\scriptsize.8760$\pm$.0024\\

\scriptsize\textsf{acoustic}  &\scriptsize.8192$\pm$.0032
&\scriptsize.8102$\pm$.0032$\bullet$  &\scriptsize.8262$\pm$.0032$\circ$ &\scriptsize.8210$\pm$.0033$\circ$ &\scriptsize.8253$\pm$.0032$\circ$
&\scriptsize.8222$\pm$.0031$\circ$\\

\scriptsize\textsf{ijcnn1} &\scriptsize.9269$\pm$.0021
&\scriptsize.9314$\pm$.0025$\circ$  &\scriptsize.9337$\pm$.0024$\circ$ &\scriptsize.9320$\pm$.0037$\circ$  &\scriptsize.9282$\pm$.0023$\circ$
&\scriptsize.9038$\pm$.0025$\bullet$\\

\scriptsize\textsf{epsilon} &\scriptsize.9550$\pm$.0007
&\scriptsize.8640$\pm$.0049$\bullet$ &\scriptsize.8643$\pm$.0053$\bullet$ &\scriptsize.8644$\pm$.0050$\bullet$ &\scriptsize.8647$\pm$.0150$\bullet$
&\scriptsize.8653$\pm$.0073$\bullet$\\

\scriptsize\textsf{covtype} &\scriptsize.8244$\pm$.0014
&\scriptsize.8271$\pm$.0011$\circ$  &\scriptsize.8248$\pm$.0013 &\scriptsize.8222$\pm$.0014$\bullet$ &\scriptsize.8246$\pm$.0010
&\scriptsize.8242$\pm$.0012\\
\hline
\multicolumn{2}{|c|}{\scriptsize win/tie/loss } &\bf\scriptsize4/6/6 &\bf\scriptsize4/6/6 &\scriptsize\bf6/4/6 &\bf\scriptsize4/6/6 &\scriptsize\bf6/8/2\\
\hline
\end{tabular}
\end{table*}

\item {\bf batch SVM-OR}: A batch learning algorithm which optimizes the pairwise hinge loss \citep{Joachims2006};
\item {\bf batch LS-SVM}: A batch learning algorithm which optimizes the pairwise square loss;
\item {\bf batch Uni-Log}: A batch learning algorithm which optimizes the (weighted) univariate logistic loss~\citep{Kotlowski:Dembczynski:Hullermeier2011};
\item {\bf batch Uni-Squ}: A batch learning algorithm which optimizes the (weighted) univariate square loss.
\end{itemize}

All experiments are performed with Matlab 7 on a node of computational cluster with 16 CPUs (Intel Xeon Due Core 3.0GHz) running RedHat Linux Enterprise 5 with 48GB main memory. For batch algorithms, due to memory limit, 8,000 training examples are randomly chosen  if training data size exceeds 8,000, whereas only 2,000 training examples are used for the \textsf{epsilon} dataset because of its high dimension.

Five-fold cross-validation is executed on training sets to decide the learning rate $\eta_t\in2^{[-12:10]}$ for online algorithms, the regularized parameter $\lambda\in2^{[-10:2]}$ for OPAUC and $\lambda\in2^{[-10:10]}$ for batch algorithms. For OAM$_\text{seq}$ and OAM$_\text{gra}$, the buffer sizes are fixed to be 100 as recommended in \citep{Zhao:Hoi:Jin:Yang2011}. For univariate approaches, the weights (i.e., class ratios) are chosen as done in \citep{Kotlowski:Dembczynski:Hullermeier2011}.

The performances of the compared methods are evaluated by five trials of 5-fold cross validation, where the AUC values are obtained by averaging over these 25 runs. Table~\ref{tab:bench:online} shows that OPAUC is significant better than the other four online algorithms OAM$_\text{seq}$, OAM$_\text{gra}$, online Uni-Exp and online Uni-Squ, particularly for large datasets. The win/tie/loss counts show that OPAUC is clearly superior to these online algorithms, as it wins for most times and never loses. Table~\ref{tab:bench:batch} shows shows that OPAUC is highly competitive to the other five batch learning algorithms; this is impressive because these batch algorithms require storing the whole training dataset whereas OPAUC does not store training data. Additionally, batch LS-SVM which optimizes the square loss is comparable to the other batch algorithms, verifying our argument that square loss is effective for AUC optimization.

\begin{figure}[!t]
\centering
\begin{minipage}{2.5in}
\includegraphics[width=3.5in]{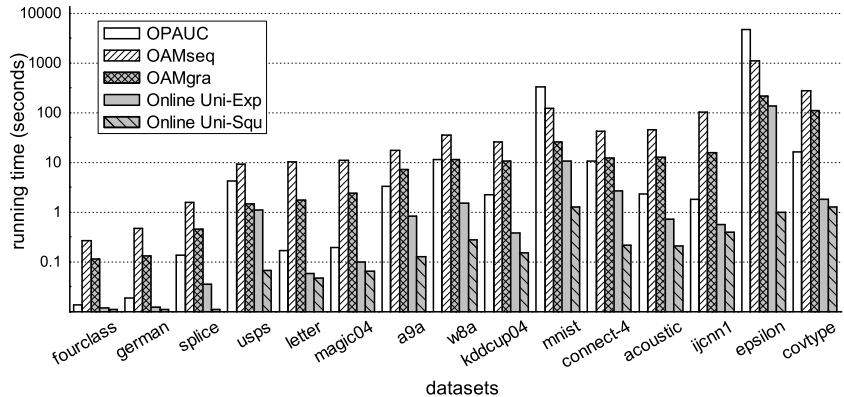}
\end{minipage}\vspace{-0.7in}
\caption{Comparison of the running time (in seconds) of OPAUC and online learning algorithms on benchmark data sets. Notice that the $y$-axis is in log-scale.}\label{fig:running-time}
\end{figure}
\begin{table}[!t]
\caption{High-dimensional datasets}\label{tab:highdata}
\smallskip\centering
\begin{tabular}{|c|cc|c|cc|}
\hline
 datasets &\#inst &\#feat  & datasets &\#inst &\#feat\\
\hline
\textsf{sector} &9,619 &55,197 & \textsf{news20.binary} &19,996  &1,355,191\\
\textsf{sector.lvr} &9,619 &55,197 &\textsf{rcv1v2} &23,149  &47,236 \\
\textsf{news20} &15,935 &62,061 &\textsf{ecml2012} &456,886  &98,519\\
\hline
\end{tabular}
\end{table}

We also compare the running time of OPAUC and the online algorithms OAM$_\text{seq}$, OAM$_\text{gra}$, online Uni-Exp and online Uni-Squ, and the average CPU time (in seconds) are shown in Figure~\ref{fig:running-time}. As expected, online Uni-Squ and online Uni-Exp takes the least time cost because they optimize on single-instance (univariate) loss, whereas the other algorithms work by optimizing pairwise loss. On most datasets, the running time of OPAUC is competitive to OAM$_\text{seq}$ and OAM$_\text{gra}$, except on the \textsf{mnist} and \textsf{epsilon} datasets which have the highest dimension in Table~\ref{tab:data}.

\subsection{Comparison on High-Dimensional Data}\label{sec:exp:RPOA}

Next, we study the performance of using low-rank matrices to approximate the full covariance matrices, denoted by OPAUCr.  Six datasets\footnote{http://www.csie.ntu.edu.tw/{\textasciitilde}cjlin/libsvmtools/}$^,$\footnote{\small{http://www.ecmlpkdd2012.net/discovery-challenge}} with nearly or more than 50,000 features are used, as summarized in Table~\ref{tab:highdata}. The \textsf{news20.binary} dataset contains two classes, different from \textsf{news20} dataset. The original \textsf{news20} and \textsf{sector} are multi-class datesets; in our experiments, we randomly group the multiple classes into two meta-classes each containing the same number of classes, and we also use the \textsf{sector.lvr} dataset which regards the largest class as positive whereas the union of other classes as negative. The original \textsf{ecml2012} and \textsf{rcv1v2} are multi-label datasets; in our experiments, we only consider the label with the largest population, and remove the features in \textsf{ecml2012} dataset that take zero values for all instances.

\begin{table*}
\caption{Testing AUC (mean$\pm$std.) of OPAUCr with online methods on high-dimensional datasets. $\bullet$/$\circ$ indicates that OPAUCr is significantly better/worse than the corresponding method (pairwise $t$-tests at $95\%$ significance level). `N/A' means that no result was obtained after running out $10^6$ seconds (about 11.6 days).}\label{tab:result2}
\centering
\begin{tabular}{|c|c|c|c|c|c|c|c|}
\hline
\scriptsize datasets & \scriptsize\textsf{sector} & \scriptsize\textsf{sector.lvr} &\scriptsize\textsf{news20} &\scriptsize \textsf{news20.binary}  &\scriptsize\textsf{rcv1v2 } &\scriptsize \textsf{ecml2012}\\
\hline
\scriptsize OPAUCr            &\scriptsize.9292$\pm$.0081
&\scriptsize.9962$\pm$.0011   &\scriptsize  .8871$\pm$.0083
&\scriptsize.6389$\pm$.0136   &\scriptsize.9686$\pm$.0029
&\scriptsize .9828$\pm$.0008\\
\scriptsize OAM$_\text{seq}$  &\scriptsize.9163$\pm$.0087$\bullet$ &\scriptsize.9965$\pm$.0064   &\scriptsize.8543$\pm$.0099$\bullet$ &\scriptsize.6314$\pm$.0131$\bullet$ &\scriptsize.9686$\pm$.0026
&\scriptsize N/A\\
\scriptsize OAM$_\text{gra}$  &\scriptsize.9043$\pm$.0100$\bullet$ &\scriptsize.9955$\pm$.0059   &\scriptsize.8346$\pm$.0094$\bullet$ &\scriptsize.6351$\pm$.0135$\bullet$ &\scriptsize.9604$\pm$.0025$\bullet$
&\scriptsize.9657$\pm$.0055$\bullet$\\
\scriptsize  online Uni-Exp   &\scriptsize.9215$\pm$.0034$\bullet$ &\scriptsize.9969$\pm$.0093   &\scriptsize.8880$\pm$.0047 &\scriptsize.6347$\pm$.0092$\bullet$ &\scriptsize.9822$\pm$.0042$\circ$
&\scriptsize .9820$\pm$.0016$\bullet$ \\
\scriptsize  online Uni-Squ   &\scriptsize.9203$\pm$.0043$\bullet$ &\scriptsize$.9669\pm$.0260 &\scriptsize$.8878\pm$.0066 &\scriptsize.6237$\pm$.0104$\bullet$ &\scriptsize.9818$\pm$.0014 &\scriptsize.9530$\pm$.0041$\bullet$ \\
\scriptsize OPAUC$^\text{f}$  &\scriptsize.6228$\pm$.0145$\bullet$ &\scriptsize.6813$\pm$.0444$\bullet$ &\scriptsize.5958$\pm$.0118$\bullet$ &\scriptsize.5068$\pm$.0086$\bullet$ &\scriptsize.6875$\pm$.0101$\bullet$
&\scriptsize.6601$\pm$.0036$\bullet$\\
\scriptsize  OPAUC$^\text{rp}$&\scriptsize.7286$\pm$.0619$\bullet$ &\scriptsize.9863$\pm$.0258$\bullet$ &\scriptsize.7885$\pm$.0079$\bullet$ &\scriptsize.6212$\pm$.0072$\bullet$ &\scriptsize.9353$\pm$.0053$\bullet$
&\scriptsize.9355$\pm$.0047$\bullet$\\
\scriptsize OPAUC$^\text{pca}$&\scriptsize.8853$\pm$.0114$\bullet$ &\scriptsize.9893$\pm$.0288$\bullet$ &\scriptsize.8878$\pm$.0115
&\scriptsize N/A &\scriptsize.9752$\pm$.0020$\circ$
&\scriptsize N/A\\
\hline
\end{tabular}
\end{table*}

\begin{figure}[!t]
\centering
\begin{minipage}{2.5in}
\includegraphics[width=3.5in]{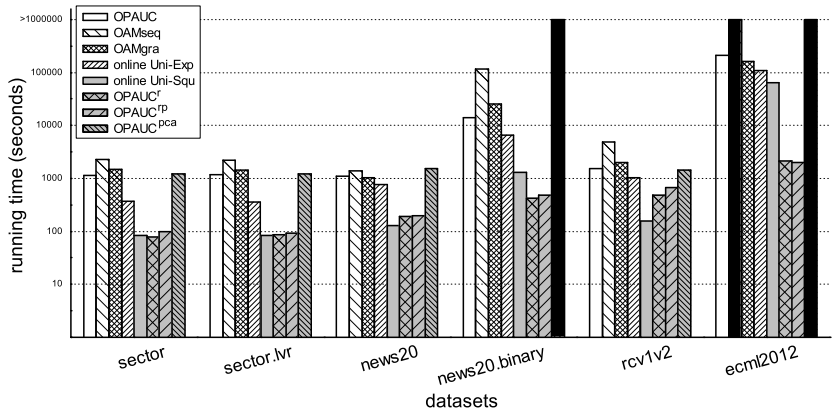}
\end{minipage}
\caption{Comparison of the running time on high dimensional datasets. Full black columns imply that no results were returned after running out the maximal running time.}\label{fig:runningtime2}
\end{figure}

Besides the online algorithms OAM$_\text{seq}$, OAM$_\text{gra}$, online Uni-Exp and online Uni-Squ, we also evaluate three variants of OPAUC to study the effectiveness of approximating full covariance matrices with low-rank matrices:
\begin{itemize}
\item {\bf OPAUC$^\text{f}$}: Randomly selects $1,000$-dim features and then works with full covariance matrices;
\item {\bf OPAUC$^\text{rp}$}: Projects into a $1,000$-dim feature space by Random Projection, and then works with full covariance matrices;
\item {\bf OPAUC$^\text{pca}$}: Projects into a $1,000$-dim feature space obtained by Principle Component Analysis, and then works with full covariance matrices.
\end{itemize}

Similar to Section 5.1, five-fold cross validation is executed on training sets to decide the learning rate $\eta_t\in2^{[-12:10]}$ and the regularization parameter $\lambda\in2^{[-10:2]}$. Due to memory and computational limit, the buffer sizes are set to 50 for OAM$_\text{seq}$ and OAM$_\text{gra}$, and the rank $\tau$ of OPAUCr is also set to 50. The performances of the compared methods are evaluated by five trials of 5-fold cross validation, where the AUC values are obtained by averaging over these 25 runs.

The comparison results are summarized in Table~\ref{tab:result2} and the average running time is shown in Figure~\ref{fig:runningtime2}. These results clearly show that our approximate OPAUCr approach is superior to the other compared methods. Compared with OAM$_\text{seq}$ and OAM$_\text{gra}$, the running time costs are comparable whereas the performance of OPAUCr is better. Online Uni-Squ and Uni-Exp are more efficient than OPAUCr because it optimizes univariate loss, but the performance of OPAUCr is highly competitive or better, except on \textsf{rcv1v2}, the only dataset with less than 50,000 features. Compared with the three variants, OPAUC$^\text{f}$ and OPAUC$^\text{rp}$ are more efficient, but with much worse performances. OPAUC$^\text{pca}$ achieves a better performance on \textsf{rcv1v2}, but it is worse on datasets with more features; particularly, on the two datasets with the largest number of features, OPAUC$^\text{pca}$ cannot return results even after running out $10^6$ seconds (almost 11.6 days). Our approximate OPAUCr approach is significantly better than all the other methods (if they return results) on the two datasets with the largest number of features: \textsf{news.binary} with more than 1 million features, and \textsf{ecml2012} with nearby 100 thousands features. These observations validate the effectiveness of the low-rank approximation used by OPAUCr for handling high-dimensional data.

\subsection{Parameter Influence}\label{sec:exp:para}
We study the influence of parameters in this section. Figure~\ref{fig1} shows that stepsize $\eta_t$ should not be set to values bigger than $1$, whereas there is a relatively big range between $[2^{-12}, 2^{-4}]$ where OPAUC achieves good results. Figures~\ref{fig2} shows that OPAUC is not sensitive to the value of regularization parameter $\lambda$ given that it is not set with a big value. Figure~\ref{fig3} shows that OPAUCr is not sensitive to the values of rank $\tau$, and it works well even when $\tau=50$; this verifies Theorem~\ref{thm:main} that a relatively small $\tau$ value suffices to lead to a good approximation performance. Figure~\ref{fig6} compares studies the influence of the iterations for OPAUC, OAM$_\text{seq}$ and OAM$_\text{gra}$, and it is observable that OPAUC convergence faster than the other two algorithms, which verifies our theoretical argument in Section~\ref{sec:theory}.

\begin{figure}[!t]
\centering
\begin{minipage}{2in}
\includegraphics[width=2.25in]{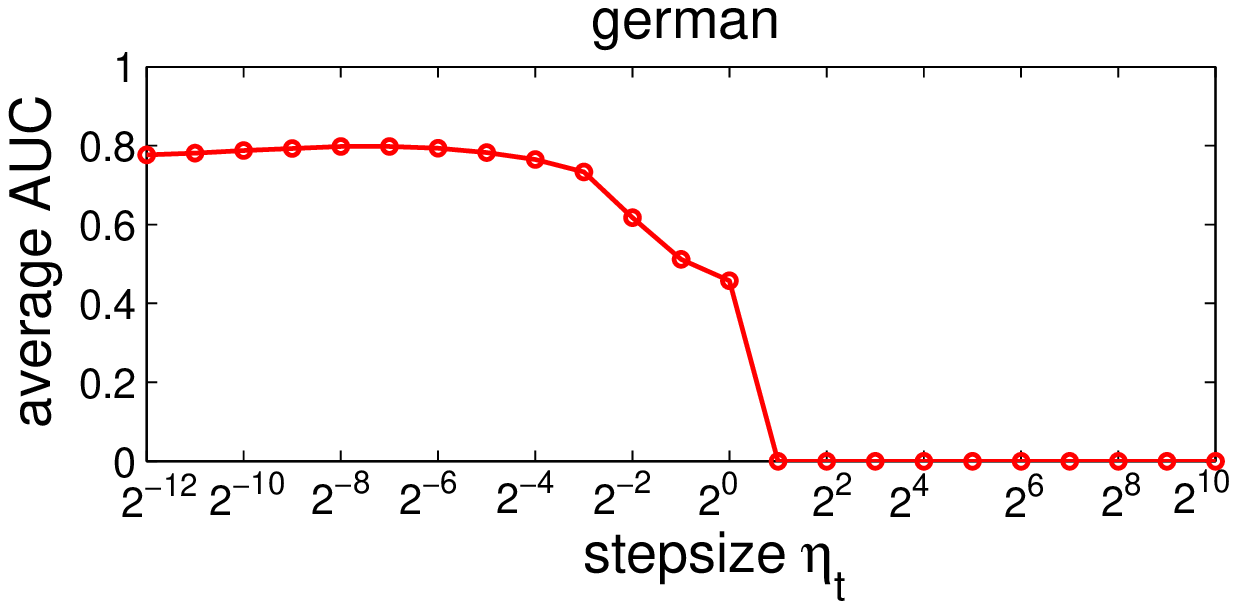}\\
\end{minipage}
\begin{minipage}{2in}
\includegraphics[width=2.25in]{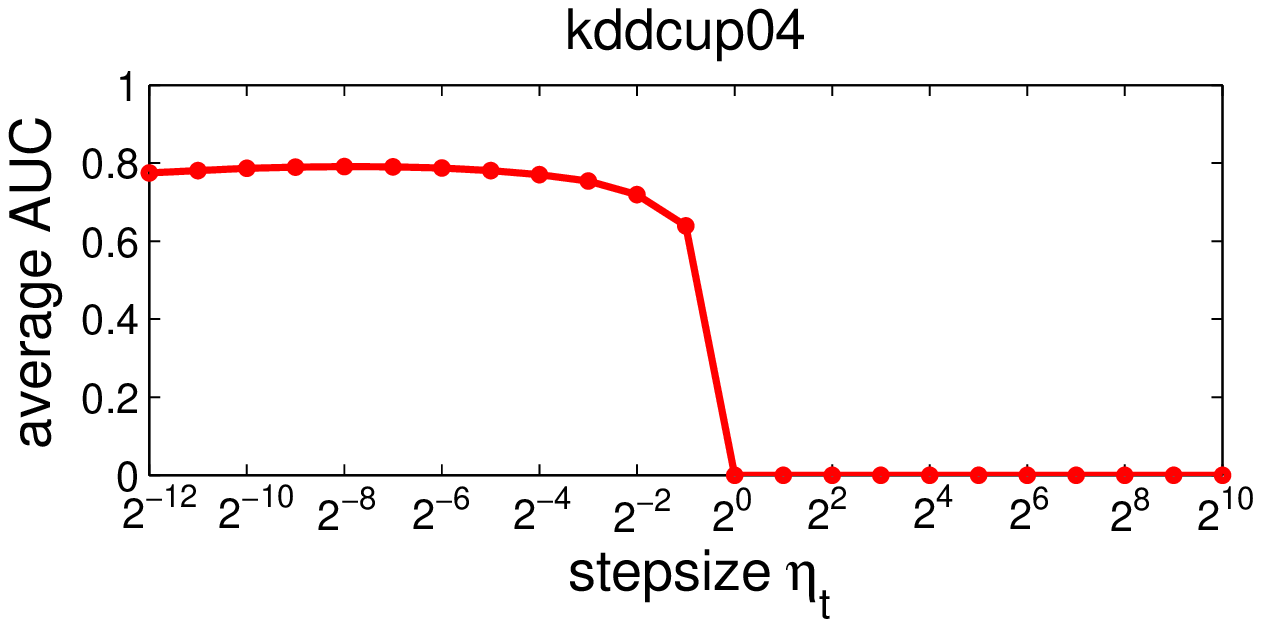}\\
\end{minipage}
\begin{minipage}{2in}
\includegraphics[width=2.25in]{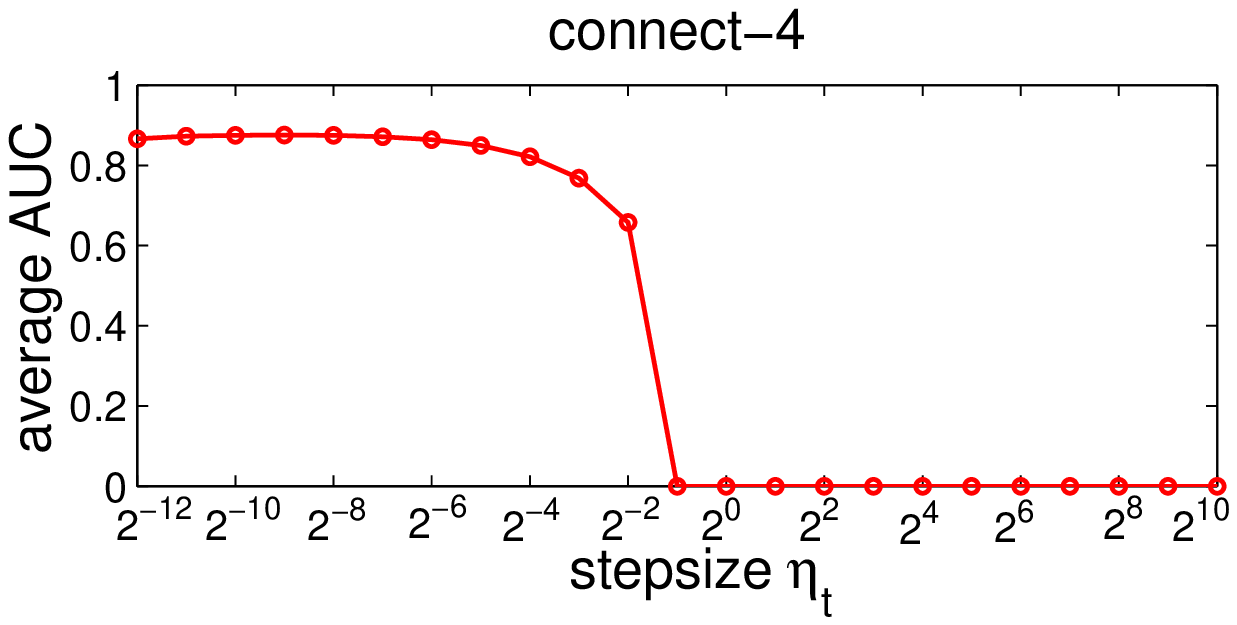}\\
\end{minipage}
\begin{minipage}{2in}
\includegraphics[width=2.25in]{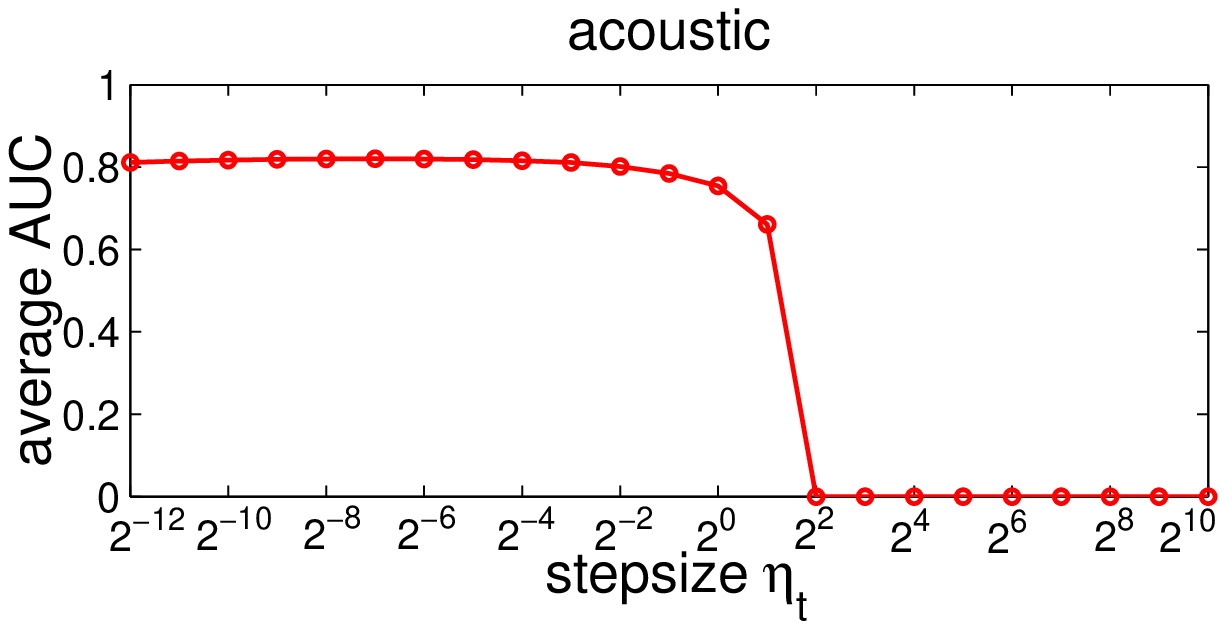}\\
\end{minipage}\vspace{-0.1in}
\caption{Influence of stepsize $\eta_t$}\label{fig1}
\end{figure}

\begin{figure}[!t]
\centering
\begin{minipage}{2in}
\includegraphics[width=2.25in]{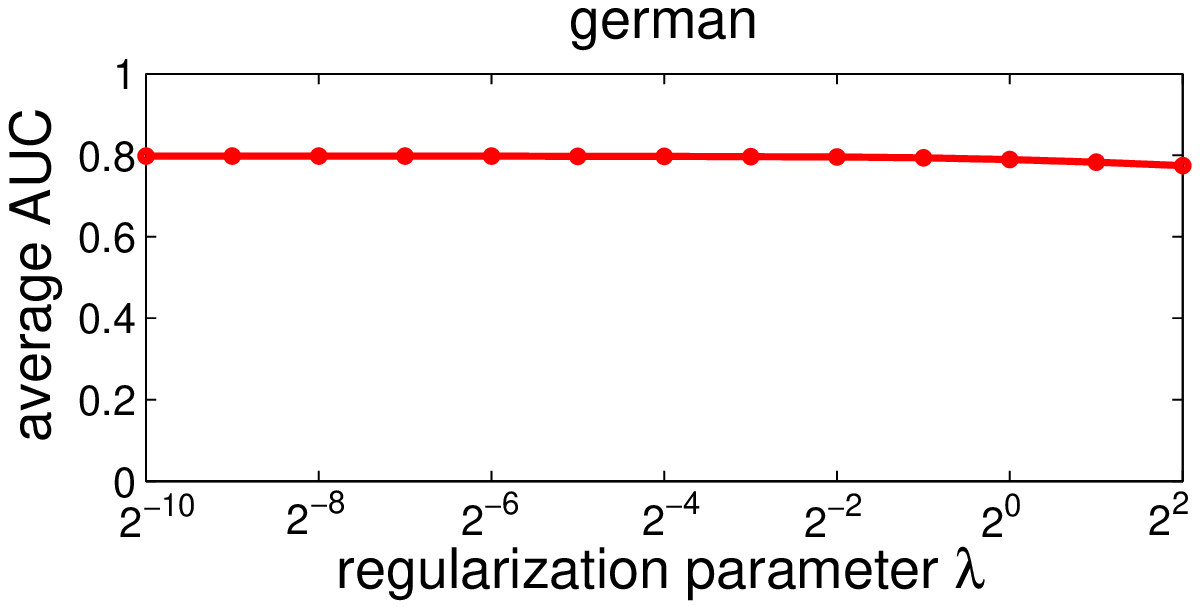}\\
\end{minipage}
\begin{minipage}{2in}
\includegraphics[width=2.25in]{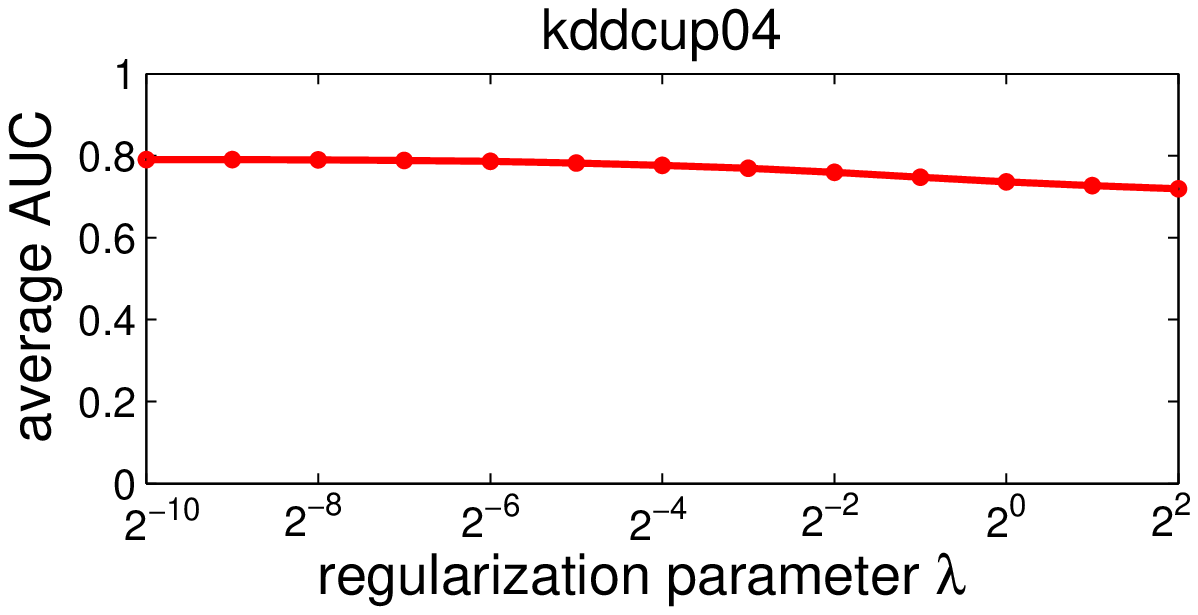}\\
\end{minipage}
\begin{minipage}{2in}
\includegraphics[width=2.25in]{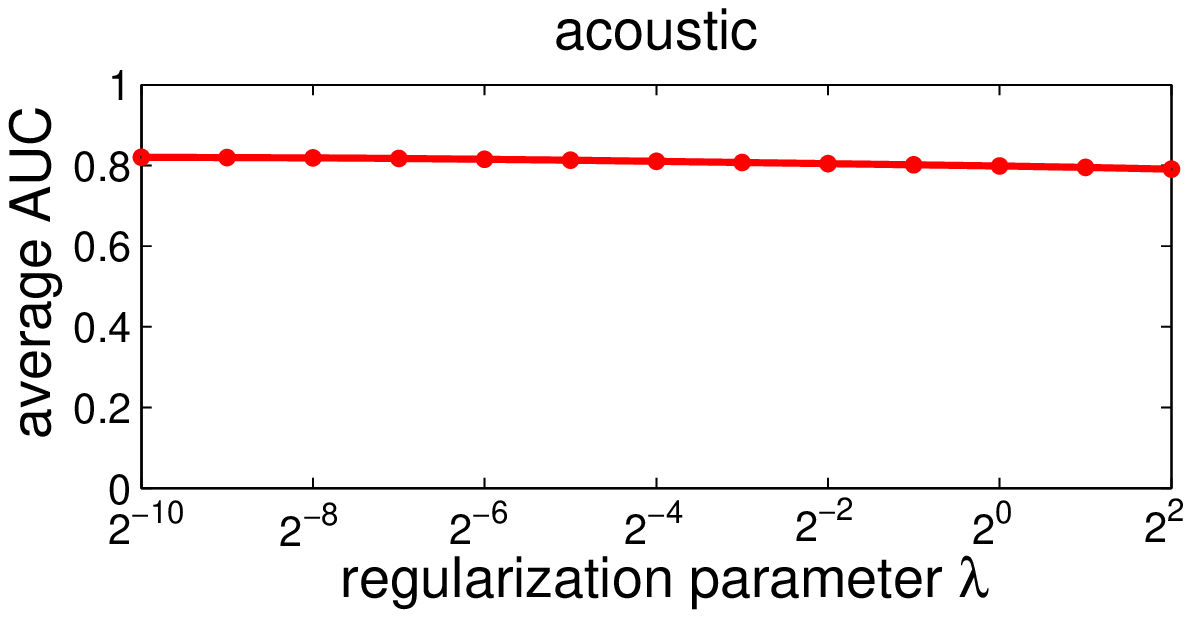}\\
\end{minipage}
\begin{minipage}{2in}
\includegraphics[width=2.25in]{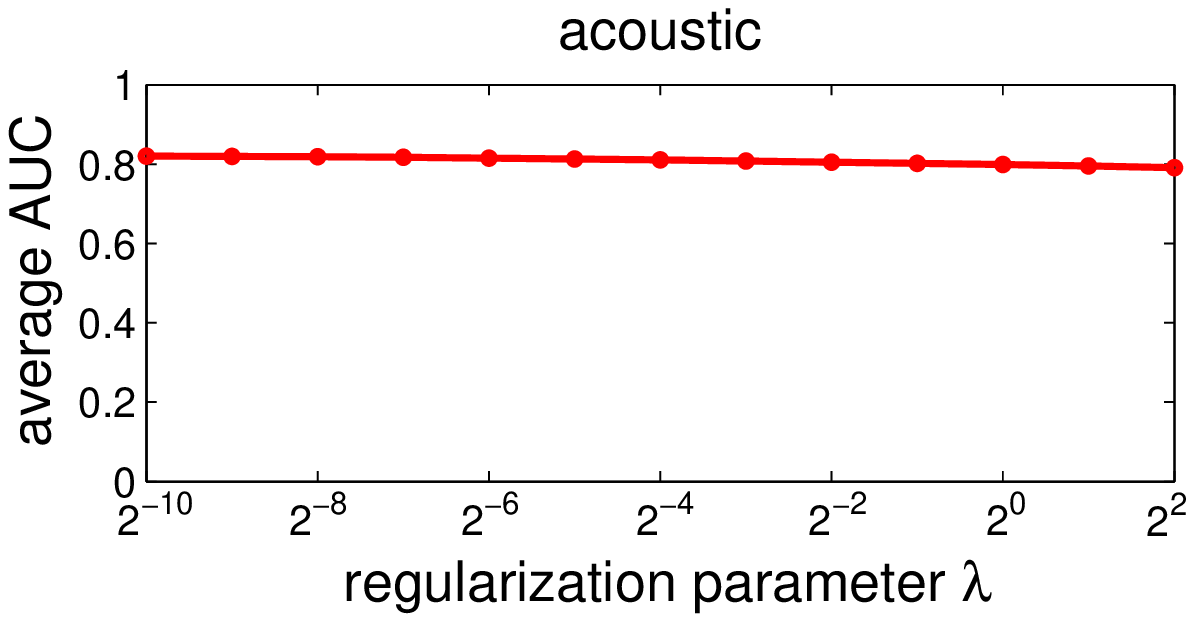}\\
\end{minipage}\vspace{-0.1in}
\caption{Influence of regularization parameter $\lambda$}\label{fig2}
\end{figure}

\begin{figure}[!t]
\centering
\begin{minipage}{2in}
\includegraphics[width=2.5in]{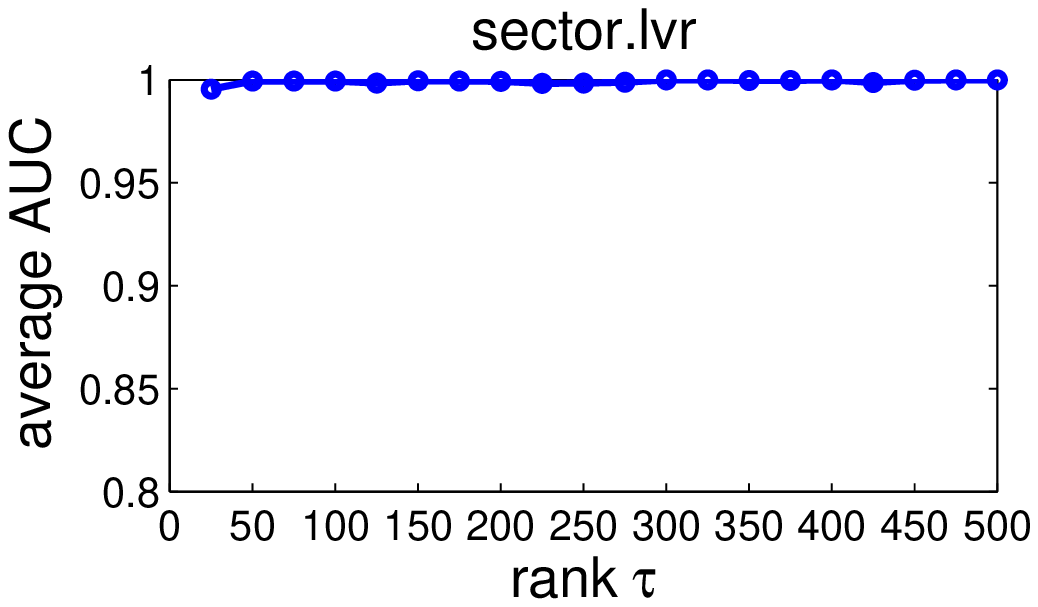}\\
\end{minipage}
\begin{minipage}{2in}
\includegraphics[width=2.5in]{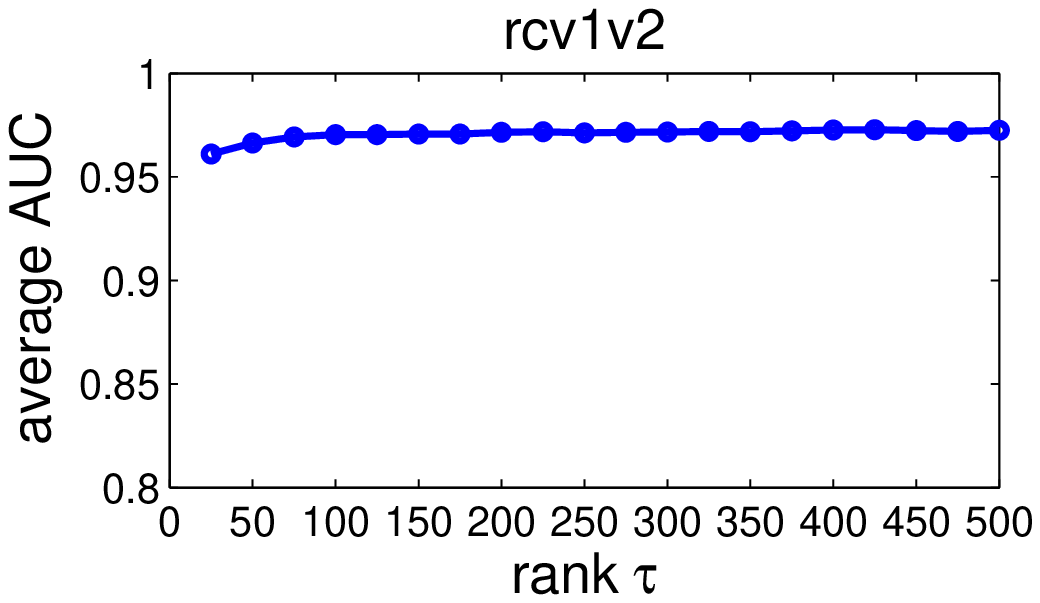}\\
\end{minipage}\vspace{-0.1in}
\caption{Influence of rank $\tau$}\label{fig3}
\end{figure}

\begin{figure}[!t]
\centering
\begin{minipage}{2in}
\includegraphics[width=2.5in]{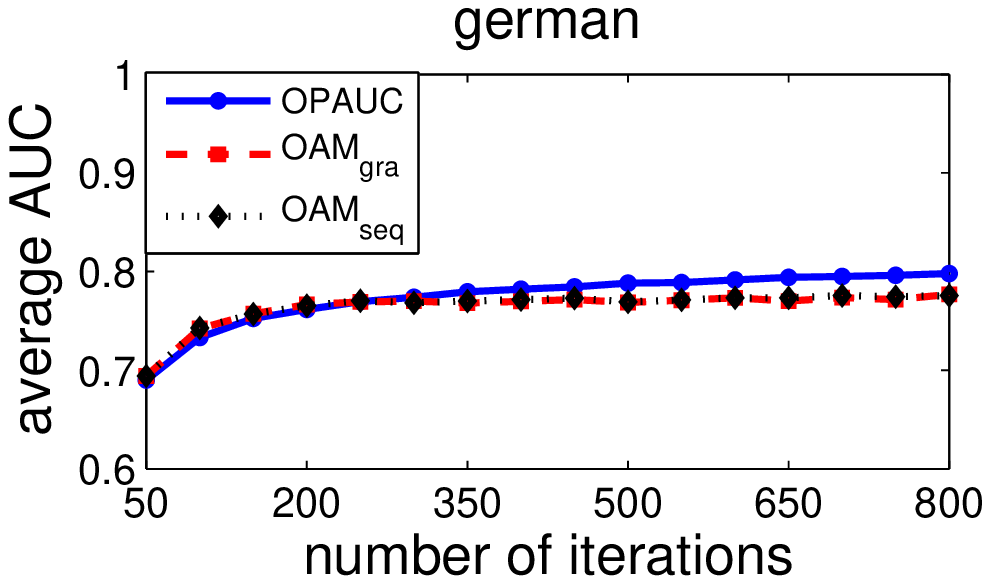}\\
\end{minipage}
\begin{minipage}{2in}
\includegraphics[width=2.5in]{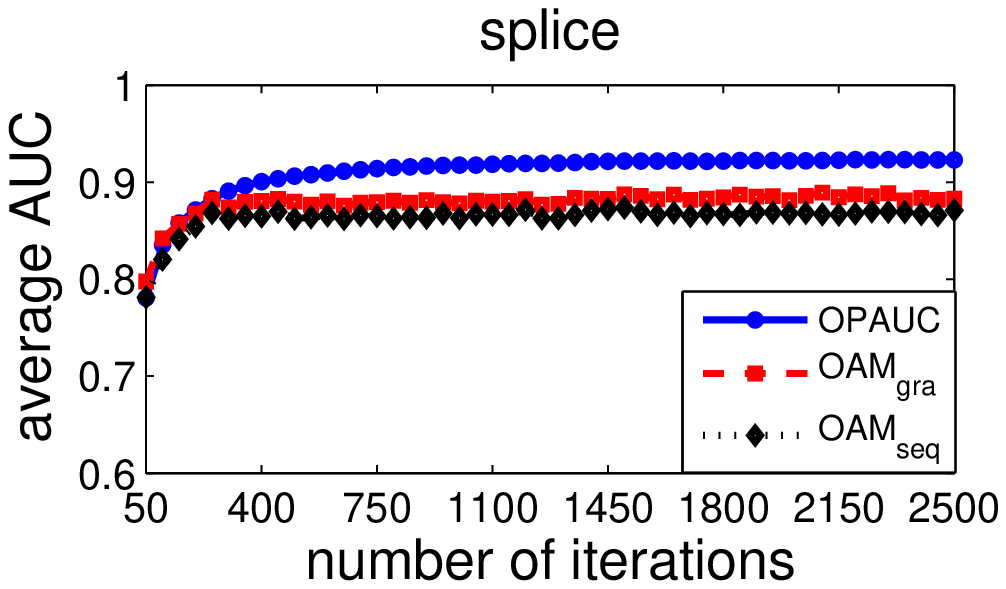}\\
\end{minipage}\vspace{-0.1in}
\caption{Convergence comparisons of OPAUC, OAM$_\text{seq}$ and OAM$_\text{gra}$}\label{fig6}
\end{figure}

\section{Conclusion}\label{sec:con}
In this paper, we study one-pass AUC optimization that requires going through the training data only once, without storing the entire dataset. Here, a big challenge lies in the fact that AUC is measured by a sum of losses defined over pairs of instances from different classes. We propose the OPAUC approach, which employs a square loss and requires the storing of only the first and second-statistics for the received training examples. A nice property of OPAUC is that its storage requirement is O($d^2$), where $d$ is the dimension of data, independent from the number of training examples. To handle high-dimensional data, we develop an approximate strategy by using low-rank matrices. The effectiveness of our proposed approach is verified both theoretically and empirically. In particular, the performance of OPAUC is significantly better than state-of-the-art online AUC optimization approaches, even highly competitive to batch learning approaches; the approximate OPAUC is significantly better than all compared methods on large datasets with one hundred thousands or even more than one million features. An interesting future issue is to develop one-pass AUC optimization approaches not only with a performance comparable to batch approaches, but also with an efficiency comparable to univariate loss optimization approaches.

\bibliography{reference}
\end{document}